\definecolor{lightgray}{gray}{0.9}
\def\thmspace{0.2em}
\newtheorem{theorem}{\hspace{\thmspace}{\bf Theorem}\!}
\newtheorem{proposition}{\hspace{\thmspace}{\bf Proposition}\!}
\newtheorem{remark}{\hspace{\thmspace}{\bf Remark}\!}
\title{Discrete-Guided Diffusion for Scalable and Safe Multi-Robot Motion Planning
}
\author{
  Jinhao Liang \\
  University of Virginia \\
  \texttt{jliang@email.virginia.edu} \\
  \And
  Sven Koenig \\
  University of California, Irvine \\
  \texttt{sven.koenig@uci.edu} \\
  \And
  Ferdinando Fioretto \\
  University of Virginia \\
  \texttt{fioretto@virginia.edu} \\
}
\begin{document}
\maketitle

\begin{abstract}
Multi-Robot Motion Planning (MRMP) involves generating collision-free trajectories for multiple robots operating in a shared continuous workspace. While discrete multi-agent path finding (MAPF) methods are broadly adopted due to their scalability, their coarse discretization severely limits trajectory quality. In contrast, continuous optimization-based planners offer higher-quality paths but suffer from the curse of dimensionality, resulting in poor scalability with respect to the number of robots. This paper tackles the limitations of these two approaches by introducing a novel framework that integrates discrete MAPF solvers with constrained generative diffusion models. 
The resulting framework, called \emph{Discrete-Guided Diffusion} (DGD), has three key characteristics: (1) it decomposes the original nonconvex MRMP problem into tractable subproblems with convex configuration spaces, (2) it combines discrete MAPF solutions with constrained optimization techniques to guide diffusion models capture complex spatiotemporal dependencies among robots, and (3) it incorporates a lightweight constraint repair mechanism to ensure trajectory feasibility. The proposed method sets a new state-of-the-art performance in large-scale, complex environments, scaling to 100 robots while achieving planning efficiency and high success rates.
\end{abstract}

\keywords{Diffusion Models \and Multi-Agent Path Planning \and Multi-Robot Motion Planning}

\section{Introduction}
Multi-Robot Motion Planning (MRMP) is a fundamental problem in robotics that requires generating collision-free trajectories for multiple robots operating in a shared environment. MRMP arises in diverse applications such as automated warehouses, coordinated drone fleets, and autonomous driving. Despite its importance, efficiently solving MRMP in complex environments remains a significant challenge due to the high dimensionality and combinatorial complexity~\cite{yu2013structure}.

To address these challenges, two major paradigms have emerged. Optimization-based methods formulate the problem as a continuous, often nonconvex trajectory optimization problem and are capable of producing smooth, high-quality paths~\cite{doi:10.1126/scirobotics.adf7843}. However, their scalability is severely limited as the number of robots and obstacles increases. 
Another body of work considers a discretized version of MRMP, known as Multi-Agent Path Finding (MAPF). It discretizes the space and time, significantly reducing computational complexity~\cite{li2021eecbs,okumura2024engineering}. MAPF algorithms scale to hundreds of robots, but their reliance on discretized grid-based movement and synchronized time steps limits their applicability in continuous and dynamic environments.

To address these limitations, recent work has explored the use of generative models, particularly diffusion models, to learn distributions over trajectories in continuous spaces~\cite{xiao2022motion,carvalho2023motion}. These models show promising results in single-robot planning as they enable diverse and high-quality trajectory generation. However, their extension to multi-robot settings introduces significant challenges: \emph{Diffusion models must capture complex inter-robot spatiotemporal dependencies while simultaneously avoiding collisions}, a problem that becomes highly intractable as the number of robots and obstacles grows.

Several recent approaches attempt to extend diffusion models to MRMP via gradient-based guidance~\cite{shaoul2024multi,ding2025swarmdiff}. These methods are able to generate high quality trajectories, but are limited by two key challenges: First, the difficulty of ensuring the feasibility of the generated trajectories with respect to non-collision and kinodynamic constraints. This is hard since gradient-based guidance cannot natively guarantee global constraint satisfaction. Second, these models struggle in cluttered or dense environments, where the complexity of the configuration space increases. Recently, ~\citet{liang2025simultaneous} proposed an approach to ensure feasibility through repeated projections within the diffusion process. This is a promising solution, but it comes at a high computational overhead, due to the nonconvexity and high dimensionality of constraints.

\begin{figure*}[!t]
  \centering
    \includegraphics[width=0.8\textwidth]{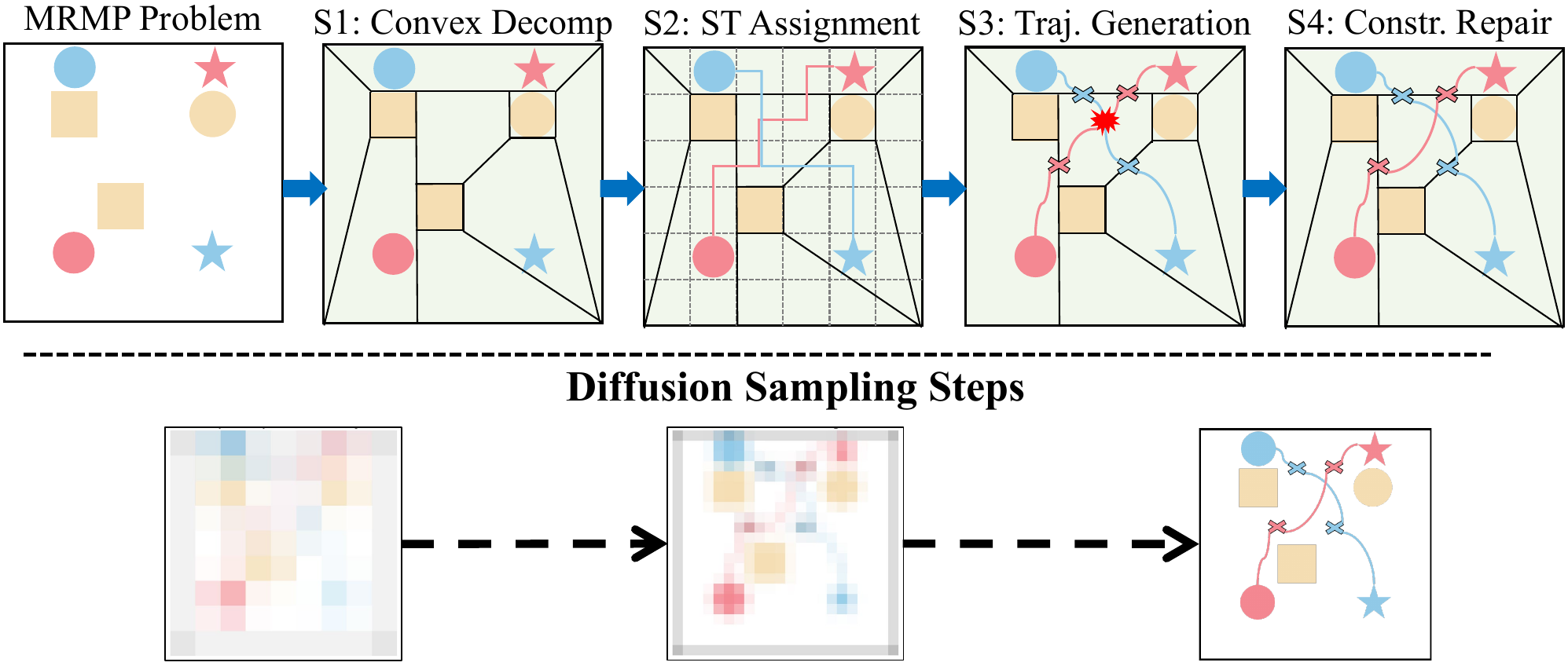}
  \caption{Overview of Discrete-Guided Diffusion. \textbf{S1} shows that the nonconvex configuration space is approximated by multiple convex regions. The wheat regions denote the obstacles, and the green regions denote the free space. The robots move from their start $\bullet$ to their goal positions $\star$. \textbf{S2} illustrates that the local start and goal for each robot in each subproblem is determined by the spatiotemporal dependency obtained from MAPF solutions. \textbf{S3} shows that the trajectory is generated by diffusion models guided by discrete spatiotemporal dependency. Collision is marked by \ding{88}. \textbf{S4} shows that the infeasible problem is repaired by constraint-aware diffusion models.}
  \label{fig:overview}
\end{figure*}

\noindent\textbf{Contribution.}
This paper addresses these challenges through a novel integration of discrete MAPF with continuous denoising diffusion. 
The proposed \emph{Discrete-Guided Diffusion} (DGD), first decomposes the original MRMP problem into a sequence of local subproblems with convex configuration space, each defined by the spatiotemporal structure obtained from the MAPF solution. 
On each subproblem, a diffusion model is then used to generate trajectories guided by this spatiotemporal structure. To ensure feasibility, DGD relies on a lightweight projected guidance rendering each step of the diffusion process feasible, and enforced only when constraint violations are detected. The whole framework is schematically illustrated in Figure~\ref{fig:overview}. 
The effectiveness and efficiency of DGD is demonstrated both theoretically and practically across a range of challenging benchmarks with up to 100+ robots.

\section{Related Work}
In this section, we review three research areas that form the foundation for our work: multi-robot motion planning (MRMP), discrete multi-agent path finding (MAPF), and generative models for motion planning.
We summarize the key methods and challenges in each area, highlighting the scalability, constraint handling, and feasibility guarantees that motivate our proposed approach.

\paragraph{Multi-robot Motion Planning.}
MRMP methods address trajectory generation for multiple robots in continuous space. 
There are two main approaches for MRMP:
{\bf (1)} Sampling-based algorithm, where constructs feasible trajectories by randomly sampling the state space~\cite{gammell2014informed,shome2020drrt}. While sampling-based planners are probabilistically complete, they require dense sampling and struggle to enforce continuous kinodynamic constraints, limiting their practical applications. 
{\bf (2)} Optimization-based planner, where the planning problem is formulated as an optimization program, such as a sequential convex program~\cite{6385823,9197162} and a mixed-integer program~\cite{6225009}. While optimization-based methods can handle various constraints, extending them to the multi-robot case couples all robots into a single high-dimensional optimization problem, suffering from exponential growth in decision variables and local minima. Recently, Graph of Convex Sets (GCS) has been proposed to mitigate this issue for single robot motion planning by decomposing the environment into a collection of convex regions~\cite{doi:10.1126/scirobotics.adf7843,chia2024gcs}. However, applying GCS to MRMP is quite challenging due to the spatiotemporal dependencies among multiple robots.

\paragraph{Discrete Multi-agent Path Finding.}
Multi-agent path finding (MAPF) focuses on a discretized version of MRMP by discretizing both time and space into steps and grids, respectively~\cite{stern2019multi}.
The MAPF literature has produced highly scalable algorithms such as Conflict-Based Search (CBS) and its bounded-suboptimal variants~\cite{sharon2015conflict,li2021eecbs,okumura2024engineering}.
These planners can coordinate hundreds of agents with correctness and solution quality guarantees in grid worlds by resolving pairwise collisions in a low-dimensional discrete space and have been adopted in large automated-warehouse deployments.
However, their reliance on synchronized, axis-aligned motions and fixed timesteps makes it difficult to transfer the resulting plans to real robots with continuous dynamics, kinodynamic limits, or tight clearances; post-processing heuristics such as temporal smoothing or continuous re-planning only partially mitigate this gap~\cite{honig2018trajectory}.

\paragraph{Generative models for motion planning.}
Diffusion generative models have recently emerged as a powerful alternative to sampling-based and optimization planners, learning a distribution over continuous trajectories that captures multimodality without hand-tuned costs~\cite{xiao2022motion,carvalho2023motion,luo2024potential}.
While early work focused on single-robot problems, extending to multi-robot motion planning requires enforcing \emph{joint} collision and kinodynamic constraints, which dramatically increases the complexity of the diffusion process. 

To address this issue, the recent literature has explored three main strategies: 
(i) \emph{Classifier or gradient guidance} adds an external collision penalty during the reverse process~\cite{ding2025swarmdiff}. Although simple, guidance offers no guarantees on constrained satisfaction and degrades in dense environments. 
\citet{shaoul2024multi} extends this approach by using diffusion models as the single-robot planner in a constraint-based MAPF framework, but this process utilizes diffusion to generate single-robot trajectories without explicitly modeling inter-robot interactions.
(ii) \emph{Constrained score estimation} methods learn the score function with added penalty terms~\cite{naderiparizi2025constrained}, but constraints satisfaction still cannot be guaranteed.
(iii) Finally, \emph{Projection-based refinement} methods, \citet{liang2025simultaneous}, interleaves diffusion steps with a nonlinear projection onto the global feasible set.
This approach produces provably collision-free trajectories for a few robots and is therefore central to our discussion and later comparisons. Nonetheless, the need to solve a full high-dimensional projection at every diffusion step incurs substantial computational overhead and limits scalability to larger teams.

To address these limitations, the proposed \emph{Discrete-Guided Diffusion} (DGD) exploits a hybrid approach where a MAPF-derived spatiotemporal skeleton is fused with a constraint-aware diffusion process \emph{inside} locally convex subproblems, avoiding Liang et al.’s costly global projections while preserving formal feasibility and yielding state-of-the-art performance on large-scale MRMP benchmarks.

\section{Preliminaries}

\paragraph{Multi-Robot Motion Planning (MRMP).} 
MRMP involves computing collision-free trajectories for multiple robots navigating a shared environment from designated start positions to goal states. 
Let $\mathcal{A}=\{a_1,\dots,a_{N_a}\}$ be a set of $N_a$ robots that move
in a bounded, planar workspace $\mathcal{W}\subset\mathbb{R}^2$ containing
a set of static obstacles $\mathcal{O}=\{o_1,\dots,o_{N_o}\}$.
The \emph{free configuration space} is defined as the set of all points in the workspace that are not occupied by obstacles:
\(
   \mathcal{C}_{\mathrm f}
   := \mathcal{W}\setminus
      \bigl(\bigcup_{o\in\mathcal{O}} o\bigr).
\)

Each robot $a_i$ is modeled as a disk of radius $r_i>0$ and is characterized at discrete time steps $h\in\{0,\ldots,H\}$ by a 2-D position $\pi_i^{h} = (x_i^{h},y_i^{h})\in\mathcal{C}_{\mathrm f}.$ A \emph{trajectory} for robot $a_i$ is the sequence $\bm{\pi}_i := (\pi_i^{0},\pi_i^{1},\dots,\pi_i^{H}).$
For each robot, their start and target states are defined by sets $\bm{B} = [b_1, b_2, \ldots, b_{N_a}]$ and $\bm{E} = [e_1, e_2, \ldots, e_{N_a}]$. 

The goal of MRMP is to compute a set of trajectories $\boldsymbol{\Pi} = \{\boldsymbol{\pi}_1, \boldsymbol{\pi}_2, \ldots, \boldsymbol{\pi}_{N_a}\}$ such that each robot starts at its designated start position and reaches its target position while avoiding collisions with obstacles and other robots.
For every time index $h$ we require
\begin{subequations}
\begin{align}
  \pi_i^{h} &\in \mathcal{C}_{\mathrm f},
  &\forall i,
  \label{eq:obst-avoid}\\[2pt]
  \|\pi_i^{h}-\pi_j^{h}\|_2 &\ge r_i+r_j,
  &\forall i<j,          \label{eq:robot-avoid}\\[2pt]
  \|\pi_i^{h+1}-\pi_i^{h}\|_2 &\le v_{\max}\,\Delta t,
  &\forall i,h<H,        \label{eq:dyn}
\end{align}
\end{subequations}
where~\eqref{eq:obst-avoid} enforces obstacle avoidance,
\eqref{eq:robot-avoid} inter-robot separation, and \eqref{eq:dyn} a
first-order kinematic bound with time step~$\Delta t$ and speed
limit~$v_{\max}$.

\paragraph{Multi-Agent Path Finding (MAPF).}
\label{sec:prelim-mapf}
In contrast, Multi-Agent Path Finding assumes the \emph{configuration space and time to be discretized}, which significantly reduces the complexity of the original problem. The workspace is represented by an undirected graph $\mathcal G=(V,E)$ whose vertices encode grid cells or roadmap milestones.  
The input to a MAPF problem is a tuple $\langle \mathcal{G}, \mathrm{S}, \mathrm{T}\rangle$, where  $\mathrm{S}\!:\![1,\ldots,k]\rightarrow V$ maps robots to their source vertices and $\mathrm{T}\!:\![1,\ldots,k]\rightarrow V$ maps robots to their target vertices. 
At each integer time step~$t$, robot~$i$ occupies a vertex $v_i^{t}\!\in\!V$ and chooses either a wait action or an edge $(v_i^{t},v_i^{t+1})\in E$.

The result of a MAPF algorithm is a set of discrete multi-robot trajectories $\boldsymbol{\Pi}_{M} = \{\boldsymbol{\pi}_{m,1}, \boldsymbol{\pi}_{m,2}, \ldots, \boldsymbol{\pi}_{m,N_a}\}$, specifying the grid location of each robot at every time step, satisfying vertex conflicts $v_i^t \ne v_j^t \ \forall i \ne j,\, t$ and edge conflicts $(v_i^t, v_i^{t+1}) \ne (v_j^{t+1}, v_j^t) \ \forall i \ne j,\, t$.

\subsection*{The Discrete-Continuous Gap} 
The MAPF solution $\boldsymbol{\Pi}_{M}$ satisfies the constraints of the discrete problem, but it does not directly translate to a continuous trajectory for the robots. To obtain a continuous trajectory, three key challenges arise:
\begin{enumerate}[leftmargin=*,parsep=0pt]
\item Embedding the vertices $V$ of $\mathcal{G}$ in the continuous space $\mathcal{C}_{\mathrm f}$, which may not be a grid.
\item Time-parameterizing each edge into a collision-free segment that respects constraint~\eqref{eq:dyn}. This requires ensuring that the trajectory segments do not violate the kinematic limits of the robots.
\item Although the MAPF solution ensures discrete collision avoidance, it does not exploit constraint~\eqref{eq:robot-avoid} in the continuous space, which can result in suboptimal solutions.
\end{enumerate}

A coarse grid may also lead to unrealistic motions or infeasible timing, whereas a fine grid blows up the state space exponentially, making MAPF itself expensive.  
To contrast these challenges, this paper proposes a framework that leverages MAPF solely as a \emph{structural guide} while a continuous diffusion model refines the trajectories to respect~\eqref{eq:obst-avoid}--\eqref{eq:dyn}.

\section{Discrete-Guided Diffusion Framework}
\label{sec:dgd-framework}
Solving the MRMP problem poses two fundamental challenges. First, MRMP is \emph{NP-hard} and the complexity grows exponentially with the number of robots, making it difficult to scale to large problems while preserving high-quality trajectories generation. Second, although generative models offer better scalability and trajectory quality, ensuring the \emph{feasibility} of entire trajectories remains challenging, particularly in high-dimensional planning spaces.

To address these challenges, this section presents an MRMP framework that couples a discrete MAPF backbone with a continuous score-based diffusion model to generate high-quality trajectories for multiple robots. The framework, called \emph{Discrete-Guided Diffusion} (DGD). Specifically, the first two stages mitigate the scalability challenge, and the next two stages address the challenge of constraint satisfaction and trajectory quality in high-dimensional spaces:
\begin{enumerate}[label=\textbf{S\arabic*}, nosep]
  \item \emph{Priority-based Convex Decomposition} (PBD):  approximation of the nonconvex free configuration space~$\mathcal C_{\mathrm f}$ using nonoverlapping convex regions to enable problem decomposition (Section~\ref{sec:pbd})
  \item \emph{Spatiotemporal assignment}: determination of entry and exit events for each convex region by leveraging spatiotemporal dependencies extracted from a MAPF solution~$\bm{\Pi}_M$ (Section~\ref{sec:sta}).
  \item \emph{Diffusion-based Trajectory Generation}: generation of high-quality trajectories for each subproblem using a diffusion model guided by~$\bm{\Pi}_M$ (Section~\ref{sec:diffusion}).
  \item \emph{Constraint-aware Diffusion Refinement}: correction of infeasible trajectories through constraint-aware diffusion models by introducing a lightweight projection mechanism to enforce constraint satisfaction (Section~\ref{sec:repair}).
\end{enumerate}

Proofs for the theoretical results discussed in this section are referred to Appendix~\ref{app: missing_proof}.

\begin{figure}[t]
  \centering
  \begin{subfigure}[t]{0.3\columnwidth}
    \centering \includegraphics[width=\textwidth]{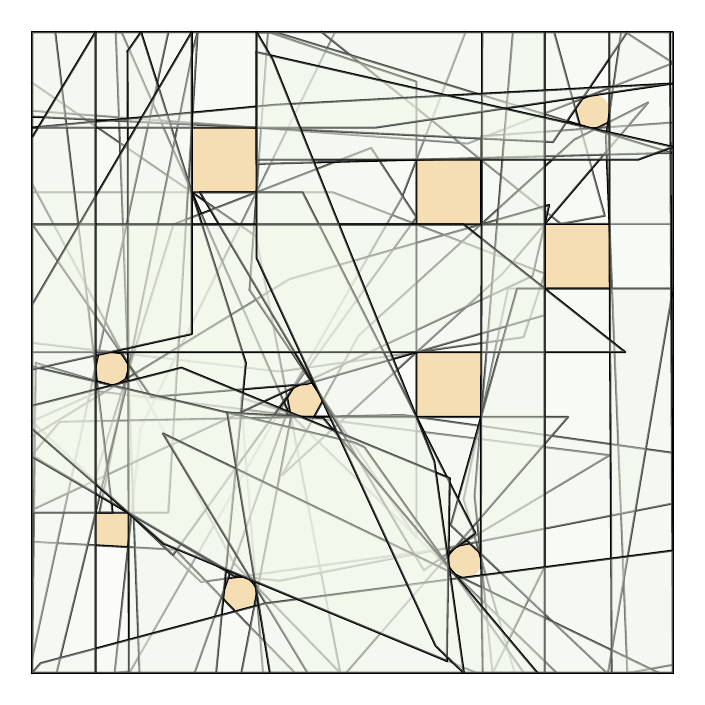}
    \caption{VCC} \label{fig:VCC}
  \end{subfigure}
  \begin{subfigure}[t]{0.3\columnwidth}
    \centering \includegraphics[width=\textwidth]{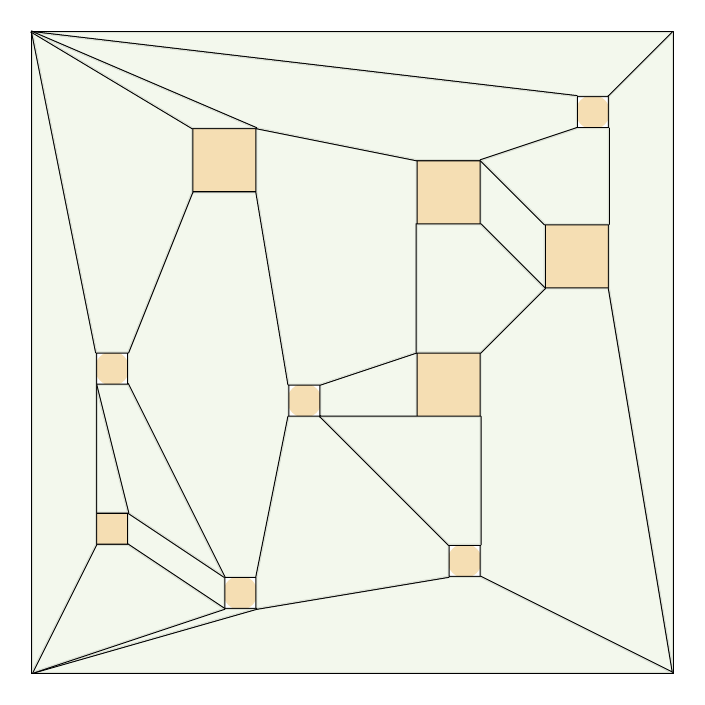}
    \caption{PBD} \label{fig:PBD}
  \end{subfigure}
  \caption{Comparison of Configuration Space Decomposition results between Visibility Clique Cover (VCC) and the proposed priority-based decomposition (PBD). Yellow objects represent obstacles, and each green-filled polygon with a black border indicates a convex region generated by the decomposition methods.}
  \label{fig:convex decomposition}
\end{figure}

\subsection{Priority-Based Convex Decomposition}
\label{sec:pbd}

Recent advancements in single-robot planning replace the original nonconvex configuration space $\mathcal C_{\mathrm f}$ with a \emph{union of convex polytopes}, enabling single-robot motion planning to be formulated as convex programs~\cite{werner2024approximating,werner2024faster}.
The current state-of-the-art, the Visibility Clique Cover (VCC) algorithm~\cite{werner2024approximating}, combines clique-based sampling with IRIS-style region inflation to approximate the free configuration space using a small number of large convex polytopes.
While this eliminates nonconvex obstacle avoidance constraints, the resulting polytopes \emph{overlap} heavily in practice. As shown in Figure~\ref{fig:VCC}, VCC produces substantially more sets than required to cover $\mathcal C_{\mathrm f}$, highlighting the inefficiency introduced by overlapping regions. For MRMP, this is problematic for two reasons:
{\bf (i)}~First, overlap explodes the number of discrete mode switches that must be co-optimized, and 
{\bf (ii)}~Second, a single robot may lie in \emph{several} polytopes at once, making a clean decomposition into independent subproblems impossible. \emph{This motivates us to find more compact and structure-aware representations.}

\paragraph{Priority-based Decomposition (PBD).} To address these limitations, we introduce a novel decomposition approach that guarantees non-overlapping convex regions while optimizing for robot traffic patterns. This method builds upon computational geometry techniques, inspired by the Hertel-Mehlhorn algorithm~\cite{hertel1983fast}.  It produces a partition $\mathcal{C}_{\text{f}}^{\text{c}} = \{R_1, R_2, \ldots, R_k\}$ where $R_i \cap R_j = \emptyset$ for $i \neq j$ and $\bigcup_{i=1}^k R_i \in \mathcal{C}_{\text{f}}$, ensuring each robot belongs to \emph{exactly one region at any time}. 

\begin{wrapfigure}[17]{r}{0.53\columnwidth} 
\vspace{-12pt}
    \vspace{-\intextsep} 
    \begin{minipage}{\linewidth} 
        \begin{algorithm}[H]
        \small
        \caption{Priority-based Decomposition (PBD)}
        \label{alg:Priority-based-Decomposition}
        \begin{algorithmic}[1]
        \STATE \textbf{Input:} Configuration space $\mathcal{C}_{\text{f}}$, MAPF solution $\boldsymbol{\Pi}_{M}$
        \STATE \textbf{Output:} Non-overlapping convex partition $\mathcal{C}_{\text{f}}^{\text{c}}$
        \STATE $R \leftarrow \text{RemoveHoles}(\mathcal{C}_{\text{f}})$ \COMMENT{Create Simple Polygon}
        \STATE $R \leftarrow \text{InitialRegions}(R)$ \COMMENT{Triangulation}
        \STATE Compute priority $p(r)$ for each $r \in R$
        \STATE Build adjacency graph ${Adj}[r]$ for each $r \in R$
        \STATE $\mathcal{H} \leftarrow \text{CreateMaxHeap}(R, \text{Adj}, p)$
        \WHILE{$\mathcal{H}$ not empty}
            \STATE $(r_1, r_2) \leftarrow \mathcal{H}.\text{pop}()$
            \IF{$(r_1 \notin R) \lor (r_2 \notin R) \lor \text{IsNotConvex}(r_1 \cup r_2)$}
                \STATE \textbf{continue}
            \ENDIF
            \STATE $r_{\text{new}} \leftarrow r_1 \cup r_2$
            \STATE Update $R$ and ${Adj}$ by replacing $r_1, r_2$ with $r_{\text{new}}$
            \STATE Update $\mathcal{H}$ with new pairs involving $r_{\text{new}}$
        \ENDWHILE
        \STATE \textbf{return} $\mathcal{C}_{\text{f}}^{\text{c}} \leftarrow R$
        \end{algorithmic}
        \end{algorithm}
    \end{minipage}
\end{wrapfigure}
This eliminates the critical issue in overlapping methods, where a single point may belong to multiple regions, resulting in larger subproblems due to redundant coverage and making parallelization difficult, as subproblems are no longer independent.
The decomposition is performed in two steps: 
\begin{enumerate}[nosep]
    \item Triangulate $\mathcal C_{\mathrm f}$ in $O(|\mathcal V|\log|\mathcal V|)$ time, where $\mathcal V$ is a set of all vertices.
    \item Iteratively remove diagonals to merge adjacent triangles, but \emph{prioritize} edges that connect regions with the \emph{highest robot traffic} according to the MAPF solution $\bm{\Pi}_M$.  
\end{enumerate}
The resulting \textbf{Priority-Based Decomposition} keeps the number of regions proportional to where the robots actually need maneuvering, dramatically reducing the downstream dimension required for diffusion, as illustrated in Figure~\ref{fig:PBD}. The detailed procedure is shown in Algorithm~\ref{alg:Priority-based-Decomposition}.

\begin{theorem}[Sound, compact, and efficient]
\label{thm:pbd}
PBD outputs a finite set $\mathcal{C}_{\text{f}}^{\text{c}}$ of pairwise disjoint
convex polygons satisfying $\bigcup_{R\in \mathcal{C}_{\text{f}}^{\text{c}}}R \in \mathcal C_{\mathrm f}$. 
Its runtime is $O(|\mathcal V|\log|\mathcal V|)$, where
$\mathcal V$ are the vertices of the triangulation.
\end{theorem}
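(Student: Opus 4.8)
The plan is to split the statement into its three assertions --- finiteness together with convexity of each output region, the disjoint-partition property together with the containment $\bigcup R\subseteq\mathcal{C}_{\mathrm f}$, and the $O(|\mathcal V|\log|\mathcal V|)$ runtime. I would establish the first two through a loop invariant maintained by the \textbf{while} loop of Algorithm~\ref{alg:Priority-based-Decomposition}, and obtain the runtime by charging the cost of each phase separately.

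First I would fix the invariant: at the top of every iteration the current set $R$ is a finite family of convex polygons whose interiors are pairwise disjoint and whose union equals the simple polygon $P$ produced by \textsc{RemoveHoles}. The base case holds because triangulating a simple polygon on $n$ vertices yields exactly $n-2$ triangles, which are convex, interior-disjoint, and cover $P$. For the inductive step, the only mutation of $R$ replaces two adjacent regions $r_1,r_2$ by $r_{\mathrm{new}}=r_1\cup r_2$; the guard \textsc{IsNotConvex} admits this union only when it is convex, and since $r_1$ and $r_2$ shared a single diagonal, deleting that diagonal preserves both the cover and interior-disjointness. Finiteness is immediate, as every accepted merge strictly decreases $|R|$. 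At termination $\mathcal{C}_{\mathrm f}^{\mathrm c}=R$ inherits each clause, and since $P\subseteq\mathcal{C}_{\mathrm f}$ by construction of \textsc{RemoveHoles}, we obtain $\bigcup_{R\in\mathcal{C}_{\mathrm f}^{\mathrm c}}R=P\subseteq\mathcal{C}_{\mathrm f}$. I would note that ``pairwise disjoint'' must be read in the interior sense, since any partition of $P$ into tiles forces adjacent tiles to share boundary edges.

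For the runtime I would charge each phase. \textsc{RemoveHoles} yields a simple polygon in $O(|\mathcal V|\log|\mathcal V|)$, the triangulation matches the stated bound, and computing the priorities $p(r)$ from $\boldsymbol{\Pi}_M$ together with building the adjacency graph are linear in the number of triangles. The heart of the argument is the merge loop. A triangulation of $P$ has $O(|\mathcal V|)$ internal diagonals, each giving one adjacent pair, so the heap is seeded with $O(|\mathcal V|)$ entries; a merge introduces new candidate pairs only along the already-present boundary diagonals of $r_{\mathrm{new}}$, so the total number of heap insertions across the whole run remains $O(|\mathcal V|)$. Each pop costs $O(\log|\mathcal V|)$, each staleness check ($r_1\notin R$ or $r_2\notin R$) is $O(1)$, and --- invoking the Hertel--Mehlhorn observation --- the convexity guard reduces to testing the two interior angles at the endpoints of the shared diagonal, hence $O(1)$. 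Summing the phases gives $O(|\mathcal V|\log|\mathcal V|)$.

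The hardest part will be the amortized accounting for the merge loop: I must show that re-inserting candidate pairs for $r_{\mathrm{new}}$ never pushes the cumulative number of heap entries past $O(|\mathcal V|)$, which amounts to certifying that each candidate pair is associated with a distinct diagonal consumed at most once and that stale entries are discarded in $O(1)$ rather than reprocessed. A secondary subtlety is justifying that the local angle test genuinely certifies global convexity of $r_1\cup r_2$ --- the Hertel--Mehlhorn lemma --- since this is exactly what lets the guard run in constant time and keeps the loop within the claimed bound.
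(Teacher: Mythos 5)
Your proposal is correct and in fact proves strictly more than the paper does. The paper's proof (Appendix~\ref{app:proof_pbd}) addresses \emph{only} the runtime claim: it charges the triangulation phase to the classical monotone-decomposition bound of $O(|\mathcal V|\log|\mathcal V|)$ and bounds the merge loop by ``at most $O(|\mathcal V|)$ merges, each costing a constant number of heap operations at $O(\log|\mathcal V|)$ apiece,'' leaving the soundness assertions (finiteness, convexity, interior-disjointness, containment in $\mathcal C_{\mathrm f}$) entirely unargued. Your loop invariant supplies exactly that missing half, and your reading of ``pairwise disjoint'' as interior-disjoint is the right one --- adjacent tiles of any partition necessarily share boundary edges, so the theorem is false under the literal set-theoretic reading. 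On the runtime, your phase-charging matches the paper's, but you are more honest about the two points the paper asserts without proof: that the convexity guard is $O(1)$, and that heap traffic stays near-linear.

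On the subtlety you flagged as hardest: your worry is warranted, and as Algorithm~\ref{alg:Priority-based-Decomposition} is literally written (re-inserting all pairs involving $r_{\mathrm{new}}$ after each merge) the claim ``total insertions $O(|\mathcal V|)$'' does not hold --- a diagonal $d$ between a growing region and a neighbor can be re-keyed every time the growing side changes identity, and an adversarial merge order drives total insertions to $\omega(|\mathcal V|)$. The clean repair, which also completes your ``distinct diagonal consumed at most once'' accounting, is to key heap entries by \emph{diagonals} rather than region pairs and to observe a monotonicity property: the union across $d$ is convex iff the interior angles at the two endpoints of $d$ (summed over the two incident regions) are at most $\pi$, and merges can only \emph{grow} the angle a region subtends at a vertex. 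Hence once the test fails at $d$ it fails forever, so each of the $O(|\mathcal V|)$ diagonals may be popped once, tested in $O(1)$, and either merged across or permanently discarded --- this is precisely the Hertel--Mehlhorn argument, and it yields the $O(|\mathcal V|\log|\mathcal V|)$ bound rigorously. Note the paper's own proof has the same gap (it simply asserts constant heap updates per merge, and its concluding cost bound is left blank in the source), so your version, with this one observation added, is the more defensible proof.
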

\begin{proof}[Sketch of Proof]
The runtime follows from (1) hole removal and triangulation in $\mathcal{O}(|\mathcal{V}|\log|\mathcal{V}|)$ time, and (2) at most $\mathcal{O}(|\mathcal{V}|)$ heap-based merge operations, each taking $\mathcal{O}(\log|\mathcal{V}|)$ time. 
\end{proof}

\begin{remark}[Region count]
Empirically, PBD eliminates up to 32.6\% of redundant convex regions across our benchmark scenarios. Compared to the baseline Visibility Clique Cover, it reduces the total number of region sets by over 50\%, while also achieving faster runtime. This directly translates to a comparable reduction in the number of diffusion model calls required for trajectory generation.
\end{remark}

\subsection{MAPF-Driven Spatiotemporal Assignment}
\label{sec:sta}

While PBD yields a non-overlapping spatial partition $\mathcal{C}_{\mathrm{f}}^{\mathrm{c}} = \{R_1, \dots, R_k\}$, this alone is insufficient for decomposing MRMP. In multi-robot systems, coordination hinges not only on spatial separation but also on the \emph{spatiotemporal ordering of robot actions}. Even when robots are assigned to disjoint regions, their trajectories can remain interdependent due to timing constraints. 
A central challenge lies in determining \emph{when} and \emph{where} each robot should enter or exit specific regions. These decisions introduce temporal coupling between otherwise spatially decoupled subproblems. Without resolving this temporal-spatial assignment, region-based planning cannot guarantee global feasibility.

To address this challenge, this paper makes a key observation: \emph{a discrete MAPF solution $\boldsymbol\Pi_M=\{\boldsymbol{\pi}_{m,i}\}_{i=1}^{N_a}$ inherently encodes a coarse spatiotemporal collision schedule}. 
Thus, rather than leveraging $\boldsymbol\Pi_M$ for its specific paths, we extract and exploit its embedded coordination structure. Although MAPF operates in a discretized space-time domain, it effectively resolves collisions and preserves essential inter-agent dependencies. 
In particular, for each robot $a_i$ and region $R_j$, we extract two key events:
\begin{itemize}[leftmargin=*, nosep]
    \item \textbf{Exit event:} Time $t_{\text{out}}$ and position $\boldsymbol{\pi}_{\text{out}}$, sich that $\boldsymbol{\pi}_{m,i}(t_{\text{out}}) \in R_j$ and $\boldsymbol{\pi}_{m,i}(t_{\text{out}}{+}1) \notin R_j$.
    \item \textbf{Entry event:} Time $t_{\text{in}}$ and position $\boldsymbol{\pi}_{\text{in}}$, such that $\boldsymbol{\pi}_{m,i}(t_{\text{in}}) \in R_k$ and $\boldsymbol{\pi}_{m,i}(t_{\text{in}}{-}1) \notin R_k$.
\end{itemize}
\smallskip 
From these, we construct a structured transition set: $\mathcal{T} = \big\{\, (a_i, R_j, t_{\mathrm{out}}, \boldsymbol{\pi}_{\mathrm{out}}, R_k, t_{\mathrm{in}}, \boldsymbol{\pi}_{\mathrm{in}}) \big\}$, where each tuple indicates that agent $a_i$ exits region $R_j$ at $(t_{\mathrm{out}}, \boldsymbol{\pi}_{\mathrm{out}})$ and enters region $R_k$ at $(t_{\mathrm{in}}, \boldsymbol{\pi}_{\mathrm{in}})$. These transitions provide start and goal constraints for region-level diffusion models, enabling temporally consistent and spatially decoupled planning.

The transition set $\mathcal{T}$ is constructed by a single forward pass through the MAPF trajectories and the region map (see Algorithm~\ref{alg:transition-extraction} for details), as illustrated in \textbf{S2} of Figure~\ref{fig:overview}.
Its structural utility is formalized as follows.
\begin{wrapfigure}[11]{r}{0.5\columnwidth} 
    \vspace{-\intextsep} 
    \begin{minipage}{\linewidth} 
        \begin{algorithm}[H] 
        \small
        \caption{Region Transition Extraction}
        \label{alg:transition-extraction}
        \begin{algorithmic}[1]
            \STATE \textbf{Input:} Convex regions $\mathcal{C}_{\text{f}}^{\text{c}}$, MAPF solution $\boldsymbol{\Pi}_{M}$
            \STATE \textbf{Output:} $\mathcal{T} = \{(a_i, R_j, t_{\text{out}}, \pi_{\text{out}}, R_k, t_{\text{in}}, \pi_{\text{in}})\}$
            \STATE $\mathcal{T} \leftarrow \emptyset$
            \FOR{robot $a_i$, region $R_j$ and $R_k$, time $t$}
                \IF{$\boldsymbol{\pi}_{M,i}(t) \in R_j$ and $\boldsymbol{\pi}_{M,i}(t+1) \in R_k$}
                    \STATE Record transition: $(a_i, R_j, t, \boldsymbol{\pi}_{M,i}(t), R_k, t+1, \boldsymbol{\pi}_{M,i}(t+1))$
                \ENDIF
            \ENDFOR
            \STATE \textbf{return} $\mathcal{T}$
        \end{algorithmic}
        \end{algorithm}
    \end{minipage}
\end{wrapfigure}
\begin{proposition}[Valid subproblem decomposition]
\label{prop:decouple}
Fix any transition set $\mathcal{T}$ obtained from $\boldsymbol{\Pi}_M$ and $\mathcal{C}_{\text{f}}^{\text{c}}$. Then:
\begin{enumerate}[nosep,label=(\alph*)]
  \item Each robot is uniquely assigned to one region $R \in \mathcal{C}_{\text{f}}^{\text{c}}$ at each time step.
  \item Within each region $R$, the set of robot trajectories is temporally bounded by entry/exit events in $\mathcal{T}$.
  \item There are no inter-region constraints at any time step, so each region defines an independent subproblem.
\end{enumerate}
\end{proposition}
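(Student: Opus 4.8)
The plan is to handle the three parts in order, since (b) builds on the well-definedness established in (a), and (c) then follows by factorizing the constraint system of Eqs.~\eqref{eq:obst-avoid}--\eqref{eq:dyn} across the regions delimited by $\mathcal{T}$. I would first state the one modelling assumption I need: the embedded MAPF positions $\boldsymbol{\pi}_{M,i}(t)$ lie inside the covered free space $\bigcup_{R\in\mathcal{C}_{\text{f}}^{\text{c}}}R$. For part~(a) the argument is then immediate from Theorem~\ref{thm:pbd}: the partition consists of pairwise disjoint polygons whose union lies in $\mathcal{C}_{\mathrm f}$, so fixing a robot $a_i$ and a step $t$, its position belongs to \emph{at least} one region by coverage and to \emph{at most} one by disjointness. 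Hence the region-assignment map $t\mapsto r_i(t)$ is single-valued, which is exactly the uniqueness claim.

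For part~(b) I would exploit the discrete structure of the trajectory. By (a) the index sequence $t\mapsto r_i(t)$ is well defined, so the occupancy set $\{t : \boldsymbol{\pi}_{M,i}(t)\in R\}$ is a finite union of maximal integer intervals. The left endpoint of each interval is a step at which $r_i$ first equals $R$ and the right endpoint a step at which it leaves $R$; by construction these are precisely the entry/exit tuples recorded by the single forward pass of Algorithm~\ref{alg:transition-extraction}. Every visit of $a_i$ to $R$ is therefore bracketed by a matching pair $(t_{\mathrm{in}},\boldsymbol{\pi}_{\mathrm{in}})$ and $(t_{\mathrm{out}},\boldsymbol{\pi}_{\mathrm{out}})$ in $\mathcal{T}$, giving the claimed temporal bound.

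For part~(c) the strategy is to show that, once the interface events of $\mathcal{T}$ are frozen as boundary data, the constraint system block-diagonalizes by region. I would classify the three constraint families: the obstacle constraint~\eqref{eq:obst-avoid} is unary and is automatically implied by $R\subseteq\mathcal{C}_{\mathrm f}$ whenever a segment stays in its assigned region; the kinematic constraint~\eqref{eq:dyn} couples only two consecutive positions of a single robot, hence remains inside one region except across a transition, where both endpoints are fixed by $\mathcal{T}$; and the separation constraint~\eqref{eq:robot-avoid} for a pair lying in the same region at time $h$ is an intra-region constraint belonging to that subproblem. The decision variables of two distinct regions are then disjoint and are linked only through the pre-fixed tuples of $\mathcal{T}$, so no surviving constraint couples them; this is the precise sense in which each region defines an independent subproblem.

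The main obstacle is the residual case of~\eqref{eq:robot-avoid} for a pair of robots occupying \emph{different} regions at the same step $h$: disjointness alone does not prevent two such robots from being close across a shared edge. I would dispatch this using the deconfliction already guaranteed by the MAPF solution, namely the vertex condition $v_i^t\neq v_j^t$ and the edge (non-swap) condition, which force the two robots into distinct, non-adjacent cells and, under the region embedding, into separated regions — thereby removing the need for a continuous cross-region separation constraint. Making this quantitative, i.e.\ relating the discrete conflict guarantees and the region geometry to the radius sum $r_i+r_j$, is the delicate step; I would close it either with an explicit clearance assumption on the embedding or, for any clearance not certified here, defer it to the constraint-aware refinement of Section~\ref{sec:repair}, which is designed precisely to detect and repair such violations.
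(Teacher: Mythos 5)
Your proof of parts (a) and (b) coincides with the paper's: disjointness plus coverage gives uniqueness of the region assignment, and the entry/exit tuples recorded by Algorithm~\ref{alg:transition-extraction} bracket each occupancy interval. Where you genuinely diverge is part (c). The paper's own argument is a single sentence asserting that ``collision and feasibility constraints only involve robots sharing the same region at the same time,'' which silently assumes away exactly the residual case you isolate: two disk robots occupying \emph{different} regions at the same step can still violate the separation constraint~\eqref{eq:robot-avoid} near a shared boundary, since disjointness of the open regions says nothing about the distance $r_i+r_j$. Your constraint classification (unary obstacle constraints absorbed by $R\subseteq\mathcal{C}_{\mathrm f}$, kinematic constraints~\eqref{eq:dyn} internal except at transitions whose endpoints are frozen by $\mathcal{T}$, separation constraints intra-region) makes the block-diagonalization claim precise in a way the paper does not, and flagging the cross-boundary case is a real contribution: the paper's (c) is, strictly speaking, incomplete without it. One caveat on your proposed patch: the MAPF vertex condition $v_i^t\neq v_j^t$ and the edge non-swap condition do \emph{not} force robots into non-adjacent cells — two agents may legally occupy adjacent vertices at the same timestep — so that route alone cannot certify continuous clearance across a region boundary. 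Your fallback options are therefore the correct ones: either an explicit clearance assumption relating grid resolution to $r_i+r_j$, or deferral to the constraint-aware refinement of Section~\ref{sec:repair}, which is consistent with how the framework actually operates (the refinement stage exists precisely because the decomposition alone does not guarantee~\eqref{eq:robot-avoid}). In short: same skeleton as the paper, but a more honest proof of (c), at the price of an added assumption the paper never states.
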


\begin{proof}[Sketch of Proof]
Disjointness of regions ensures spatial uniqueness. The time-based conditions in Algorithm~\ref{alg:transition-extraction} ensure that each robot's regional assignment is temporally non-overlapping. Feasibility constraints in MRMP (e.g., inter-robot collision) only apply to robots simultaneously present in the same region. 
\qedhere
\end{proof}

\subsection{Diffusion-based trajectory generation}
By decomposing the original problem into multiple independent subproblems, we are able to operate within smaller, decoupled subspaces. This improves runtime efficiency \emph{and} sample quality. In this subsection, we describe how diffusion models are used to generate high-quality trajectories within each subproblem, as illustrated in {\bf S3} of Figure~\ref{fig:overview}.

\label{sec:diffusion}
\paragraph{Diffusion modesl -- preliminaries}
Denoising Diffusion Probabilistic Models~\cite{sohl2015deep, ho2020denoising, song2020score} define a generative process by learning to reverse a forward stochastic transformation that progressively corrupts structured data into noise. The generative model then approximates the inverse of this transformation to restore the original structure, allowing sampling from the learned distribution. 
The diffusion model is thus trained to minimize the difference between the estimated and true scores of the perturbed data. Once trained, the score network $\bm{s}_\theta$ is used to \emph{denoise} random samples \(\bm{x}_T \sim \mathcal{N}(0,I)\) by iteratively updating $\bm{x}_t$ along the \emph{score} direction until the samples resemble the original data. This is also known as the \emph{sampling phase}, which follows the Stochastic Gradient Langevin Dynamics (SGLD) update rule:
\begin{align}
    \bm{x}_t = \bm{x}_{t+1} + \frac{\epsilon}{2} s_\theta(\bm{x}_{t+1}, t+1) + \sqrt{\epsilon} \cdot z, \label{eq: sgld}
\end{align}
where $\epsilon$ denotes the step size and $z$ is standard Normal.


\paragraph{Trajectory Generation using Diffusion Models}
To facilitate the generation of collision-free trajectories by diffusion models, a straightforward method is to bias the sampling process by incorporating gradient-based guidance to encourage the robot to avoid obstacles and other robots. This can be achieved by adding a penalty term into Eq. (\ref{eq: sgld}): 
\[
\begin{aligned}
    \bm{x}_t = \bm{x}_{t+1} + \frac{\epsilon}{2} (s_\theta(\bm{x}_{t+1}, t+1)+ \mathcal{J}(\bm{x}_{t+1},\mathcal{O}))  + \sqrt{\epsilon} \cdot z,
\end{aligned}
\]
where
\[
\begin{aligned}
\mathcal{J}(\bm{x}_t,\mathcal{O}) =  \sum_{i=1}^{N_a} \nabla_{\bm{x}} d_o\left(\bm{x}_{t}^i, \mathcal{O}\right) + \sum_{i=1}^{N_a}\nabla_{\bm{x}} d_a\left(\bm{x}_{t}^i, \bm{x}_{t}^{-i}\right).
\end{aligned}
\]
We define the obstacle and inter-agent penalty functions based on the squared L2-norm distance:
\[
\begin{aligned}
d_o(\boldsymbol{\pi}_i, \mathcal{O}) &= \sum_{h=0}^H \sum_{o_j \in \mathcal{O}} \max\left\{ 0, r_{i,j}^o - \left\| \pi_i^h - o_j \right\|_2 \right\}, \\
d_a(\boldsymbol{\pi}_i, \boldsymbol{\Pi}_{-i}) &= \sum_{h=0}^H \sum_{\pi_j \in \boldsymbol{\Pi}_{-i}} \max\left\{ 0, r_{i,j}^a - \left\| \pi_i^h - \pi_j^h \right\|_2 \right\},
\end{aligned}
\]
where $d_o(\cdot)$ and $d_a(\cdot)$ measure the violation of minimum safety distances $r_{i,j}^o$ and $r_{i,j}^a$ between robot $a_i$ and obstacle $o_j$, and between robots $a_i$ and $a_j$, respectively. $\boldsymbol{\Pi}_{-i}$ denotes the set of trajectories of all robots except $a_i$.

Incorporating the penalty term into the sampling process allows the diffusion model to sample trajectories that follow the learned distribution while softly enforcing task requirements (e.g., collision avoidance). However, the guidance term alone does not guarantee that each trajectory remains within its assigned convex region, which is essential for maintaining independence across subproblems. 

Fortunately, since each subproblem is defined over a convex domain, we can efficiently enforce this constraint by projecting each intermediate sample back onto the feasible set. To this end, we integrate a convex projection operator
$\mathcal{P}_{\mathcal{C}_{\text{f}}^c}$ into the SGLD update rule with minimal computational overhead:
\[
\bm{x}_t = \mathcal{P}_{\mathcal{C}_{\text{f}}^c} \left( \bm{x}_t\right),
\]
where $\mathcal{P}_{\mathcal{C}_{\text{f}}^c}(\bm{x}) = \arg\min_{\bm{y} \in \mathcal{C}_{\text{f}}^c} \| \bm{x} - \bm{y} \|_2^2$ denotes the Euclidean projection onto the feasible convex region $\mathcal{C}_{\text{f}}^c$ associated with the specific subproblem.
\begin{theorem}[Obstacle avoidance]
\label{thm:Obstacle avoidance}
Let $\bm{x}_0$ denotes the trajectory generated by diffusion models with projection operator $\mathcal{P}_{\mathcal{C}_{\text{f}}^c}$. For arbitrary small $\xi$, there exist $t$ such that $\sum_{i=1}^{N_a} \nabla_{\bm{x}} d_o\left(\bm{x}_0, \mathcal{O}\right) \leq \xi$.
\end{theorem}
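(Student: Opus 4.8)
The plan is to exploit the fact that the projection operator $\mathcal{P}_{\mathcal{C}_{\text{f}}^c}$ forces every sampled waypoint into the convex free-space partition, where the obstacle penalty $d_o$ is flat and its gradient therefore vanishes. I would begin by recalling from Theorem~\ref{thm:pbd} that $\mathcal{C}_{\text{f}}^c=\{R_1,\dots,R_k\}$ consists of convex polygons contained in the free configuration space, $\bigcup_{R}R\subseteq\mathcal C_{\mathrm f}$. Combined with the safety-clearance margin enforced when the regions are constructed (so that each $R$ models the configuration space of a disk robot of radius $r_i$), this guarantees that every point $p\in R$ satisfies $\|p-o_j\|_2\ge r_{i,j}^o$ for all $o_j\in\mathcal O$. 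Since the sampling loop applies $\mathcal{P}_{\mathcal{C}_{\text{f}}^c}$ at each iteration, the defining property of the Euclidean projection onto a closed convex set places every waypoint $\pi_i^h$ of the projected iterate inside its assigned region $R$.

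I would then substitute this membership directly into the penalty. For every robot $i$, horizon index $h$, and obstacle $o_j$, the argument of the hinge satisfies $r_{i,j}^o-\|\pi_i^h-o_j\|_2\le 0$, so each term $\max\{0,\cdot\}$ equals $0$; summing over all indices yields $d_o=0$. Because the hinge is identically zero, and hence flat, on the clearance-respecting interior, its gradient is exactly zero there, so $\sum_{i}\nabla_{\bm{x}}d_o=0\le\xi$ for any $\xi>0$. This establishes the existence of a diffusion step $t$---namely any step at which the projection has taken effect, including the terminal one producing $\bm{x}_0$---after which the claimed bound holds.

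The step I expect to be the main obstacle is the non-smoothness of the hinge at the active boundary $\{\,\|\pi_i^h-o_j\|_2=r_{i,j}^o\,\}$, where the subgradient is set-valued and the clean ``zero-gradient'' conclusion does not immediately apply. I would resolve this in two complementary ways: (i) for any waypoint in the strict interior of the clearance region the gradient is genuinely zero; and (ii) the event that a projected waypoint lands exactly on the zero-clearance boundary has Lebesgue measure zero under the injected Langevin noise $\sqrt{\epsilon}\,z$, so with probability one the gradient vanishes. Alternatively, one may simply select the canonical subgradient $0\in\partial\max\{0,g\}\big|_{g=0}$, which satisfies the bound for every $\xi$. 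This boundary analysis is precisely what the arbitrarily small slack $\xi$ in the statement is designed to absorb.
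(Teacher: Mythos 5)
Your proof is essentially sound as a deterministic argument, but it takes a genuinely different route from the paper's, and it misdiagnoses what the $\xi$ slack is for. The paper's proof does not argue ``projection $\Rightarrow$ membership $\Rightarrow$ zero hinge $\Rightarrow$ zero gradient'' at all. Instead it (i) observes that projection onto the convex set $\mathcal{C}_{\text{f}}^{c}$ is well-posed (unique minimizer of a strongly convex objective), and then (ii) invokes Corollary~3 of \citet{christopher2025neuro} to bound the \emph{expected} distance to feasibility of the iterate \emph{after} the Langevin update operator $\mathcal{U}$ is applied to the projected point, i.e.\ $\mathbb{E}\bigl[\textit{Error}(\mathcal{U}(\mathcal{P}_{\mathcal{C}_{\text{f}}^{c}}(\bm{x})), \mathcal{C}_{\text{f}}^{c})\bigr] \leq \xi$ for some $t$, before concluding via $\bigcup_{R}R \subseteq \mathcal{C}_{\mathrm{f}}$. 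This ordering is the crux: in interleaved projected Langevin sampling, the score step and the injected noise $\sqrt{\epsilon}\,z$ can carry the iterate back \emph{out} of the feasible set after each projection, so feasibility along the chain is only approximate and probabilistic. That --- not the non-smoothness of the hinge at the clearance boundary, which you spend your third paragraph on --- is why the theorem is phrased as ``for arbitrarily small $\xi$ there exists $t$'' rather than as exact feasibility. Your deterministic ``exactly zero'' conclusion holds only under the additional reading that the projection is the very last operation producing $\bm{x}_0$; under the paper's reading (bound at an intermediate time $t$ after a noisy update), your argument as written fails and the expected-error machinery is genuinely needed. What your route buys, when the projection-last reading is adopted, is a stronger and more elementary conclusion (exact zero penalty, no appeal to external convergence results); what the paper's route buys is correctness under noise-after-projection interleaving.

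One further caveat: your premise that every $p \in R$ satisfies $\|p - o_j\|_2 \geq r_{i,j}^{o}$ relies on a safety-clearance margin (obstacle inflation by the robot radius) that the paper never establishes --- PBD is defined to partition $\mathcal{C}_{\mathrm{f}} = \mathcal{W}\setminus\bigcup_{o}o$, whose points can lie arbitrarily close to obstacle boundaries, where the hinge $\max\{0,\, r_{i,j}^{o} - \|\pi_i^h - o_j\|_2\}$ is strictly positive. To be fair, the paper's own proof makes the same silent identification of $\mathcal{C}_{\mathrm{f}}$-membership with satisfaction of the $r_{i,j}^{o}$ clearance, so this is a shared gap rather than one you introduced; but since you state the margin as if it were part of the construction, you should flag it explicitly as an assumption.
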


\begin{proof}[Sketch of Proof]
Since $\mathcal{C}_{\text{f}}^c$ is a convex set, the projection operator $\mathcal{P}_{\mathcal{C}_{\text{f}}^c}$ guarantees that the generated trajectory lies within this feasible set~\cite{christopher2024constrained}. Moreover, since $\bigcup_{R \in \mathcal{C}_{\text{f}}^{c}} R \subseteq \mathcal{C}_{\text{f}}$, and $\mathcal{C}_{\text{f}}$ denotes the free configuration space, the generated trajectories satisfy the obstacle avoidance constraints. 
\end{proof}

Additionally, beyond extracting timing structure, the MAPF plan $\boldsymbol{\Pi}_M$ provides a strong \textit{trajectory prior}. Thus, inspired by recent work on Diffusion Models Inversion~\cite{meng2022sdedit,rout2025semantic}, which shows that even an imperfect or suboptimal initial solution can lead to better performance than sampling from pure noise. Motivated by this, we adopt a similar strategy to improve sampling efficiency and quality. Unlike standard diffusion models that start from Gaussian noise $\bm{x}_T \sim \mathcal{N}(0, I)$, we leverage a valid but suboptimal solution from a MAPF solver to initialize the diffusion model's reverse process $\bm{x}_T = \boldsymbol{\Pi}_{M}$, which provides a strong structural prior for the sampling process.

\subsection{Constraint-aware Diffusion Refinement}
\label{sec:repair}
The previous stages ensure that each trajectory remains within its assigned convex region and satisfies obstacle avoidance constraints. For inter-robot collision avoidance, although multiple guidance methods have been introduced to encourage generating collision-free trajectories, these mechanisms do not provide strict guarantees, especially with a large number of robots. 

To address this limitation, we examine each region separately and introduce a \emph{Constraint-aware Diffusion} refinement step to repair the infeasible subproblems, as shown in \textbf{S4} of Figure~\ref{fig:overview}. If any constraint in~\eqref{eq:obst-avoid}–\eqref{eq:robot-avoid}
is violated, we run a constraints-aware diffusion model to obatin the feasible solution. Specifically, we introduce anther \emph{projection operator} $\mathcal{P}_{\boldsymbol{\Pi}}$ into the SGLD update rule:
\[
\bm{x}_t = \mathcal{P}_{\boldsymbol{\Pi}} \left( \bm{x}_t\right),
\]
where 
\[
\mathcal{P}_{\boldsymbol{\Pi}}(\bm{x}) = \arg\min_{\bm{y}} \| \bm{x} - \bm{y} \|_2^2 \quad \text{s.t.} \quad \text{Eq.~(\ref{eq:robot-avoid}), (\ref{eq:dyn})}.
\]

To solve $\mathcal{P}_{\boldsymbol{\Pi}}$ more efficiently, we use a Lagrangian dual method, where the constraints are incorporated into
a relaxed objective by using Lagrange multipliers $\nu$:
\begin{equation*}
\begin{aligned}
    \mathcal{L}(\bm{y}, \boldsymbol{\nu}) = & \| \bm{x} - \bm{y}\|_2^2 +  \boldsymbol{\nu} \sum\nolimits_{i=1}^{N_a}\nabla_{\bm{y}} d_a\left(\bm{y}_{t}^i, \bm{y}_{t}^{-i}\right),
\end{aligned}
\end{equation*}

To accelerate the convergence, a quadratic penalty term $\rho$ is introduced, and the augmented Lagrangian function is defined as:
\begin{equation*}
\begin{aligned}
    \mathcal{L}_{\text{alm}}(\bm{y}, \boldsymbol{\nu}) = \mathcal{L}(\bm{y}, \boldsymbol{\nu}) +  \rho \| \sum\nolimits_{i=1}^{N_a}\nabla_{\bm{y}} d_a\left(\bm{y}_{t}^i, \bm{y}_{t}^{-i}\right) \|^2.
\end{aligned}
\end{equation*}

A dual ascent strategy is used to solve the relaxed problem, where the primal variable $\boldsymbol{x}'$ is updated via gradient descent on the augmented Lagrangian $\mathcal{L}_{\text{alm}}(\bm{y}, \boldsymbol{\nu})$, and the dual variable $\boldsymbol{\nu}$ is updated according to $\boldsymbol{\nu} \leftarrow \boldsymbol{\nu} + \rho \left\| \sum\nolimits_{i=1}^{N_a} \nabla_{\bm{y}} d_a\left(\bm{y}_{t}^i, \bm{y}_{t}^{-i}\right) \right\|^2.$
The penalty coefficient $\rho$ is gradually increased to enforce the constraints more strictly. This iterative process continues until the residual of the collision avoidance constraints falls below a predefined threshold, at which point the algorithm returns a feasible trajectory.

\begin{remark}
Although inter-robot collision avoidance constraints are generally difficult to handle in MRMP due to their nonconvexity and high dimensionality, our approach focuses only on resolving such constraints within infeasible subproblems. These subproblems operate in significantly lower-dimensional spaces compared to the global problem. Moreover, by applying a Lagrangian relaxation, we further simplify the optimization landscape, making the constraint repair process highly efficient in practice.
\end{remark}

\section{Experiments}

\subsection{Experimental Setup}
\begin{figure}[t]
  \centering
  \begin{subfigure}[t]{0.2\columnwidth}
    \centering
    \includegraphics[width=\textwidth]{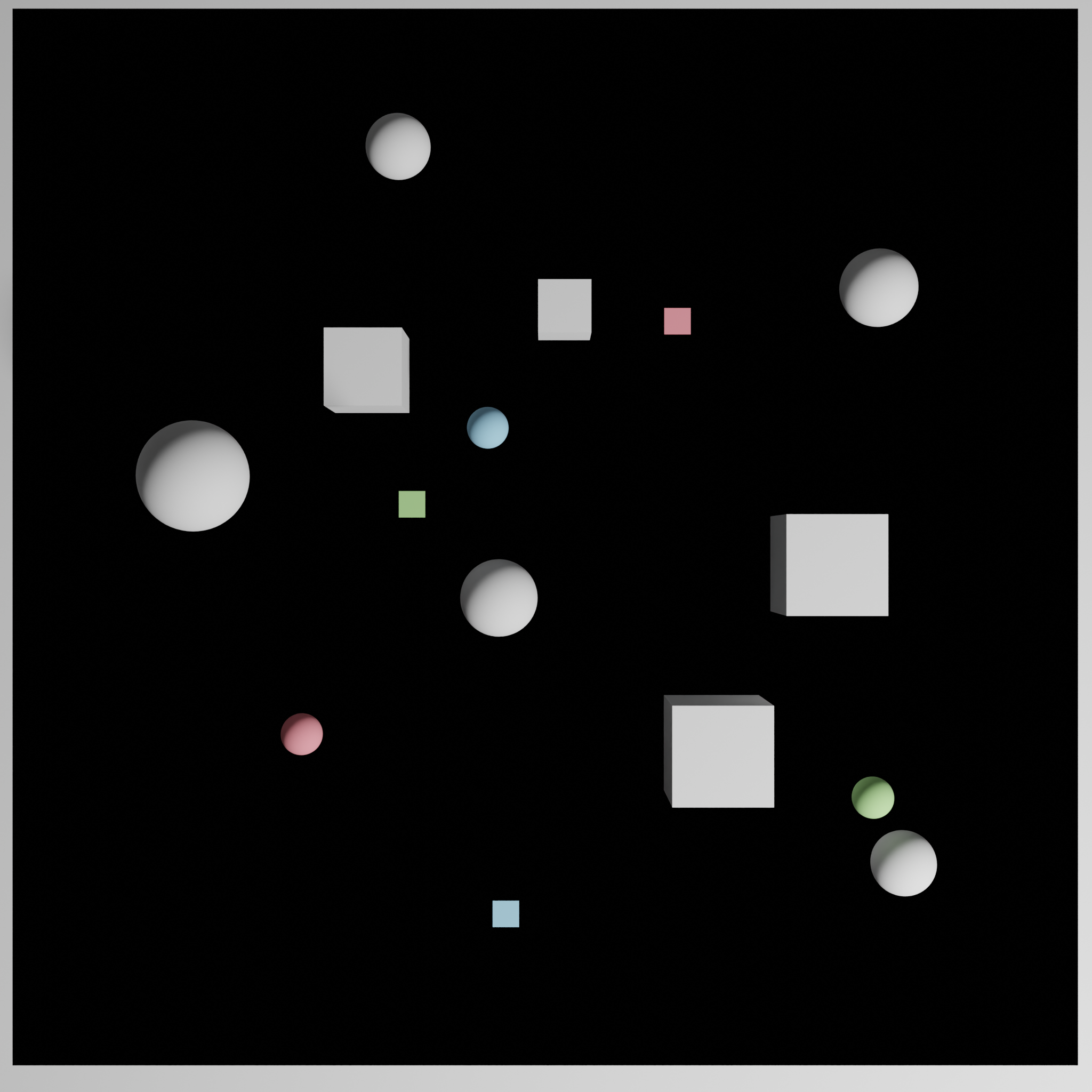}
    \caption{Basic.}
    \label{fig:Basic Maps}
  \end{subfigure}
  \hfill
  \begin{subfigure}[t]{0.2\columnwidth}
    \centering
    \includegraphics[width=\textwidth]{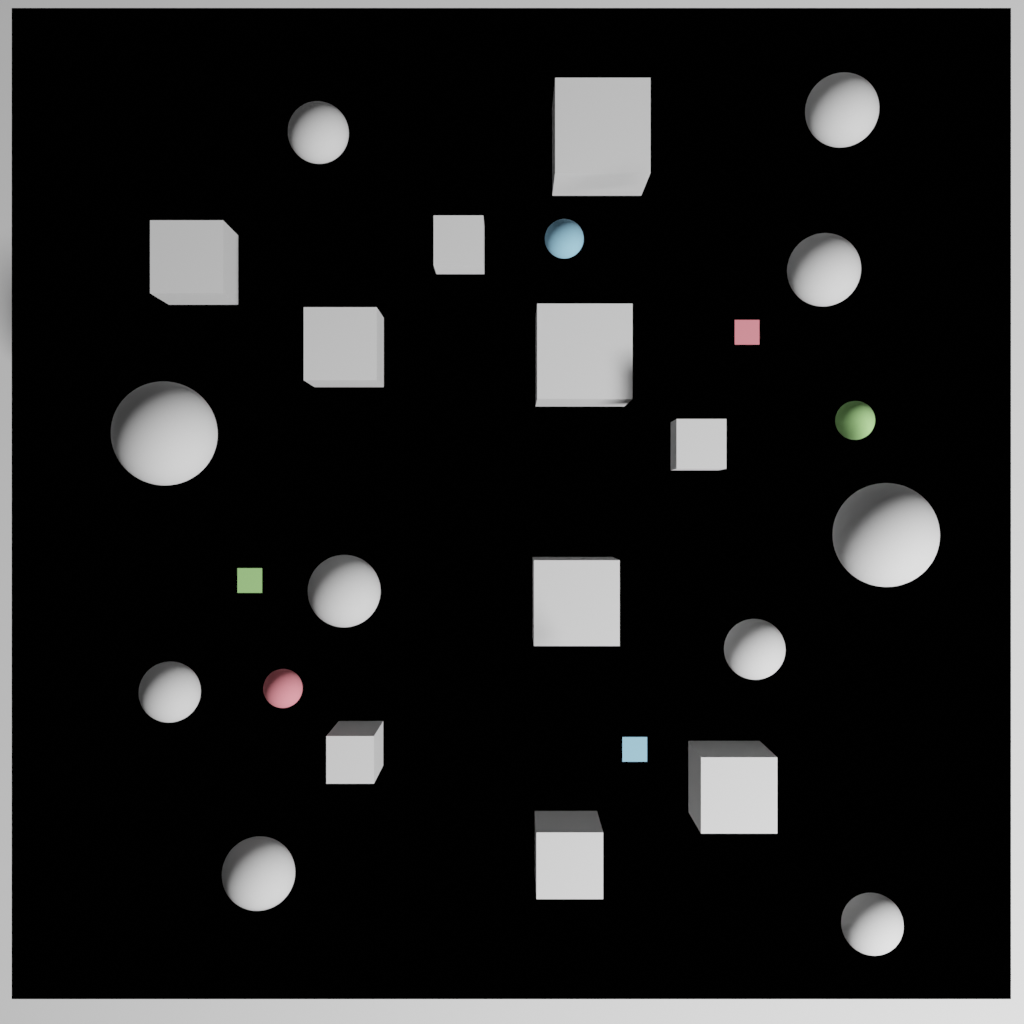}
    \caption{Dense.}
    \label{fig:Dense Maps}
  \end{subfigure}
  \hfill
  \begin{subfigure}[t]{0.2\columnwidth}
    \centering
    \includegraphics[width=\textwidth]{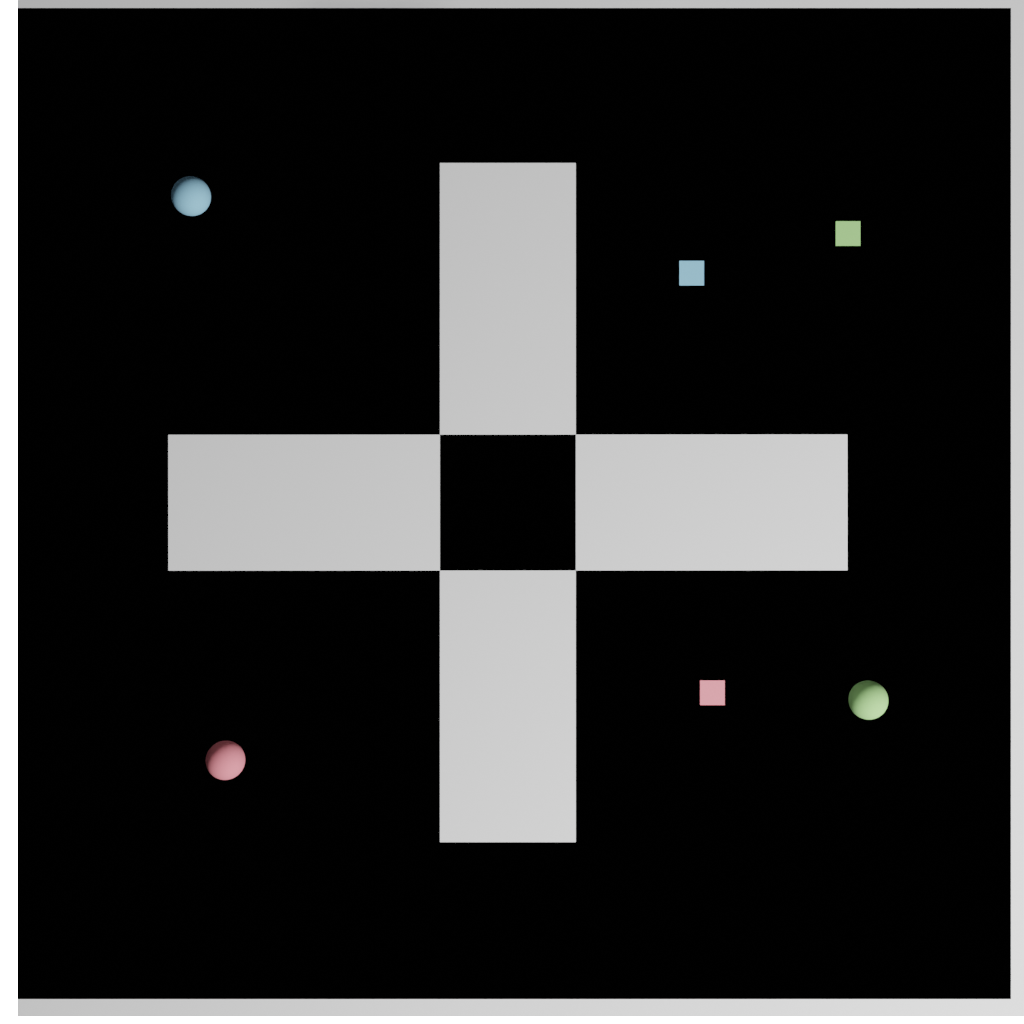}
    \caption{Room.}
    \label{fig:Room Maps}
  \end{subfigure}
  \hfill
  \begin{subfigure}[t]{0.2\columnwidth}
    \centering
    \includegraphics[width=\textwidth]{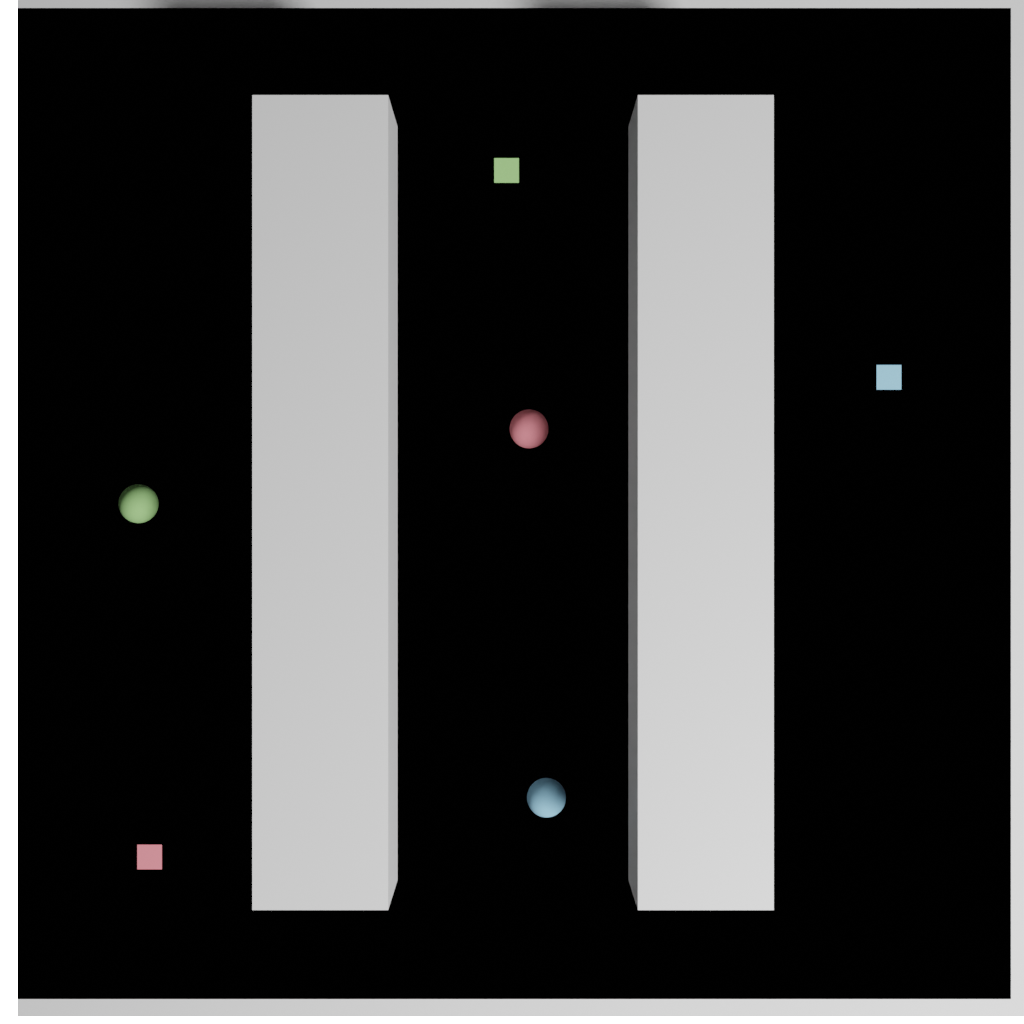}
    \caption{Shelf.}
    \label{fig:Shelf Maps}
  \end{subfigure}
  \caption{Examples of benchmark used for MRMP experiments, with increasing complexity. Colorful spheres and plates denote the start and goals of robots. White objects indicate obstacles.}
  \label{fig:Examples of benchmark}
\end{figure}

\paragraph{Settings.}
To evaluate the performance of diffusion-based MRMP algorithms, we design two experimental settings targeting different aspects of the problem.
\textbf{(1) Feasibility, Efficiency, Quality:} We adopt a diverse MRMP benchmark introduced in~\cite{liang2025simultaneous}, which includes four representative map types: random maps with varying obstacle densities (basic and dense), and structured maps mimicking real-world environments (room and shelf layouts). For each map type, we consider six robot counts (3, 6, 9, 12, 15, and 18), and evaluate across 25 instances with different configurations. Illustrations of the benchmark maps are included in the Figure~\ref{fig:Examples of benchmark}.
\textbf{(2) Scalability:} To evaluate the scalability of our method, we use a large-scale map containing over 100 heterogeneous obstacles. This setup supports scenarios with up to 100 robots, providing a challenging setting to examine the method’s ability to generate feasible plans in high-dimensional and cluttered environments.

\paragraph{Evaluation Metrics.}
Different algorithms are empirically evaluated based on \underline{Success rate}, \underline{Running time}, \underline{Path Length}, and \underline{Acceleration}. The \emph{Success rate} indicates the proportion of test cases solved without collisions and within the time limit (900 seconds), \emph{Running time} measures the computational efficiency required to generate a solution, \emph{Path Length} and \emph{Acceleration} evaluate the quality of generated trajectories.

\paragraph{Competing Methods.}
The proposed approach is compared against three SOTA diffusion-based baseline methods:
\begin{enumerate}[leftmargin=*, parsep=0pt, itemsep=0pt, topsep=0pt]
\item \textbf{Motion Planning Diffusion (MPD)}: A state-of-the-art diffusion model for single-robot motion planning~\cite{carvalho2023motion}, which we extend to the multi-robot setting for comparison.
\item \textbf{Multi-robot Motion Planning Diffusion (MMD)}: A method that integrates diffusion models with a classical search-based MAPF algorithm to generate MRMP solutions under soft collision constraints~\cite{shaoul2024multi}.
\item \textbf{Simultaneous MRMP Diffusion (SMD)}: The current SOTA method recently introduced in~\cite{liang2025simultaneous}. It integrates diffusion models with constrained optimization to generate collision-free trajectories for MRMP.
\end{enumerate}
The implementation details are provided in Appendix~\ref{app: Implementation Details}.

\subsection{Comparison across Methods}
\begin{table}[t]
\centering
\small
\setlength{\tabcolsep}{2pt}
\begin{tabular}{lc|>{\columncolor{lightgray}}c>{\columncolor{lightgray}}c>{\columncolor{lightgray}}c|ccc|ccc|ccc}
\toprule
Map & Robots 
& \multicolumn{3}{c|}{DGD} 
& \multicolumn{3}{c|}{MPD} 
& \multicolumn{3}{c|}{MMD} 
& \multicolumn{3}{c}{SMD} \\
\cmidrule(lr){3-5} \cmidrule(lr){6-8} \cmidrule(lr){9-11} \cmidrule(lr){12-14} 
& & S & T & P & S & T & P & S & T & P & S & T & P  \\
\midrule
\multirow{3}{*}{\rotatebox[origin=c]{90}{\textbf{Basic}}} 
& 6  & \textbf{100} & 12.2 & 1.21 & 76 & \textbf{9.1} & 1.12 & \textbf{100} & 27.1 & 1.12 & \textbf{100} & 254.3 & \textbf{1.10} \\
& 12 & 96 & 25.0 & 1.28 & 52 & \textbf{8.7} & \textbf{1.10} & \textbf{100} & 56.0 & 1.12 & N/A & N/A & N/A \\
& 18 & \textbf{96} & 65.7 & 1.33 & 8 & \textbf{9.4} & \textbf{1.09} & \textbf{96} & 86.3 & 1.13 & N/A & N/A & N/A \\
\midrule
\multirow{3}{*}{\rotatebox[origin=c]{90}{\textbf{Dense}}} 
& 6  & \textbf{100} & 12.3 & 1.26 & 4 & \textbf{9.2} & \textbf{1.11} & 40 & 36.5 & 1.15 & \textbf{100} & 287.3 & 1.13 \\
& 12 & \textbf{100} & \textbf{32.9} & 1.35 & 0 & N/A & N/A & 8 & 62.5 & \textbf{1.15} & N/A & N/A & N/A \\
& 18 & \textbf{76} & \textbf{47.8} & 1.37 & 0 & N/A & N/A & 8 & 87.1 & \textbf{1.18} & N/A & N/A & N/A \\
\midrule
\multirow{3}{*}{\rotatebox[origin=c]{90}{\textbf{Room}}} 
& 6  & \textbf{100} & \textbf{38.9} & 1.44 & 0 & N/A & N/A & 24 & 55.5 & \textbf{1.16} & \textbf{100} & 181.0 & 1.19 \\
& 12 & \textbf{100} & \textbf{84.0} & 1.50 & 0 & N/A & N/A & 4 & 122.6 & \textbf{1.14} & N/A & N/A & N/A \\
& 18 & \textbf{100} & \textbf{219.0} & \textbf{1.55} & 0 & N/A & N/A & 0 & N/A & N/A & N/A & N/A & N/A \\
\midrule
\multirow{3}{*}{\rotatebox[origin=c]{90}{\textbf{Shelf}}} 
& 6  & \textbf{100} & \textbf{64.6} & 1.30 & 0 & N/A & N/A & 32 & 48.5 & 1.18 & \textbf{100} & 274.3 & \textbf{1.16} \\
& 12 & \textbf{100} & \textbf{131.5} & 1.33 & 0 & N/A & N/A & 4 & 95.1 & \textbf{1.22} & N/A & N/A & N/A \\
& 18 & \textbf{84} & \textbf{324.8} & \textbf{1.38} & 0 & N/A & N/A & 0 & N/A & N/A & N/A & N/A & N/A \\
\bottomrule
\end{tabular}
\caption{Success rate (S, in percentage), running time (T, in seconds), and path length (P).}
\label{tab:mrmp_benchmark}

\end{table}

We evaluate the performance of all methods on the standard MRMP benchmark, comparing success rates, runtimes, and path lengths across scenarios of increasing complexity and robot count (6, 12, and 18 agents). Full results are summarized in Table~\ref{tab:mrmp_benchmark}, with additional metrics and ablations provided in Appendix~\ref{app: Additional Results}.

\subsection*{Baseline Performance}

\noindent {\bf Motion Planning Diffusion (MPD)} achieves the lowest runtimes and path lengths when successful. This is the case for the simplest problems (Basic maps with 6 robots). However, MPD fails to scale, consistently failing to generate feasible solutions beyond three robots in any more complex scenario. This underscores the limitations of unguided diffusion in multi-agent settings with tight coupling.

\noindent{\bf Multi-robot Motion Planning Diffusion (MMD)} improves substantially over MPD by jointly modeling all agents. It maintains high success rates on Basic maps with up to 18 robots. However, its performance sharply degrades in more constrained environments. For instance, on Room and Shelf maps with 12 or more agents, MMD fails consistently. This breakdown is attributed to its lack of explicit coordination mechanisms, which leads to congestion and deadlocks in narrow passages.

\noindent{\bf Simultaneous MRMP Diffusion (SMD)}  shows high-quality planning with 100\% success rates for up to 6 robots. However, its computational cost is orders of magnitude higher than other methods. For 12 or more robots, the running time becomes prohibitively long, and no solution is obtained within the imposed time limit.

\subsection*{Discrete Guided Diffusion (DGD)}
DGD outperforms all baselines in both success rate and computational efficiency. By decomposing the MRMP problem into multiple region-level subproblems using a structured MAPF schedule, DGD solves each subproblem independently and in parallel, significantly reducing planning overhead.
DGD \emph{attains over 92\% success rate across all settings} except the most challenging (Dense and Shelf maps with 18 robots), where it still \emph{leads with a success rate of 76\%}.
Additionally, it achieves such results with lowest runtimes for the non-trivial benchmarks. 
Notably, on Dense maps with 6 robots, DGD matches SMD’s perfect success rate while using only 4\% of its runtime.

DGD’s path lengths are comparable to those of MPD and MMD. While the average path length appears slightly higher, this is primarily because the averages are computed only over successful trials. In more challenging scenarios, where MPD and MMD often fail, DGD still succeeds, albeit with longer paths due to the complexity of the environment. Example trajectories generated by DGD are shown in Figure~\ref{fig:Generated Trajectories on MRMP Benchmark.} and those for the baselines in Appendix~\ref{app: Additional Results}.

\begin{figure}[t]
  \centering
  \begin{subfigure}[t]{0.4\columnwidth}
    \centering
    \includegraphics[width=\textwidth]{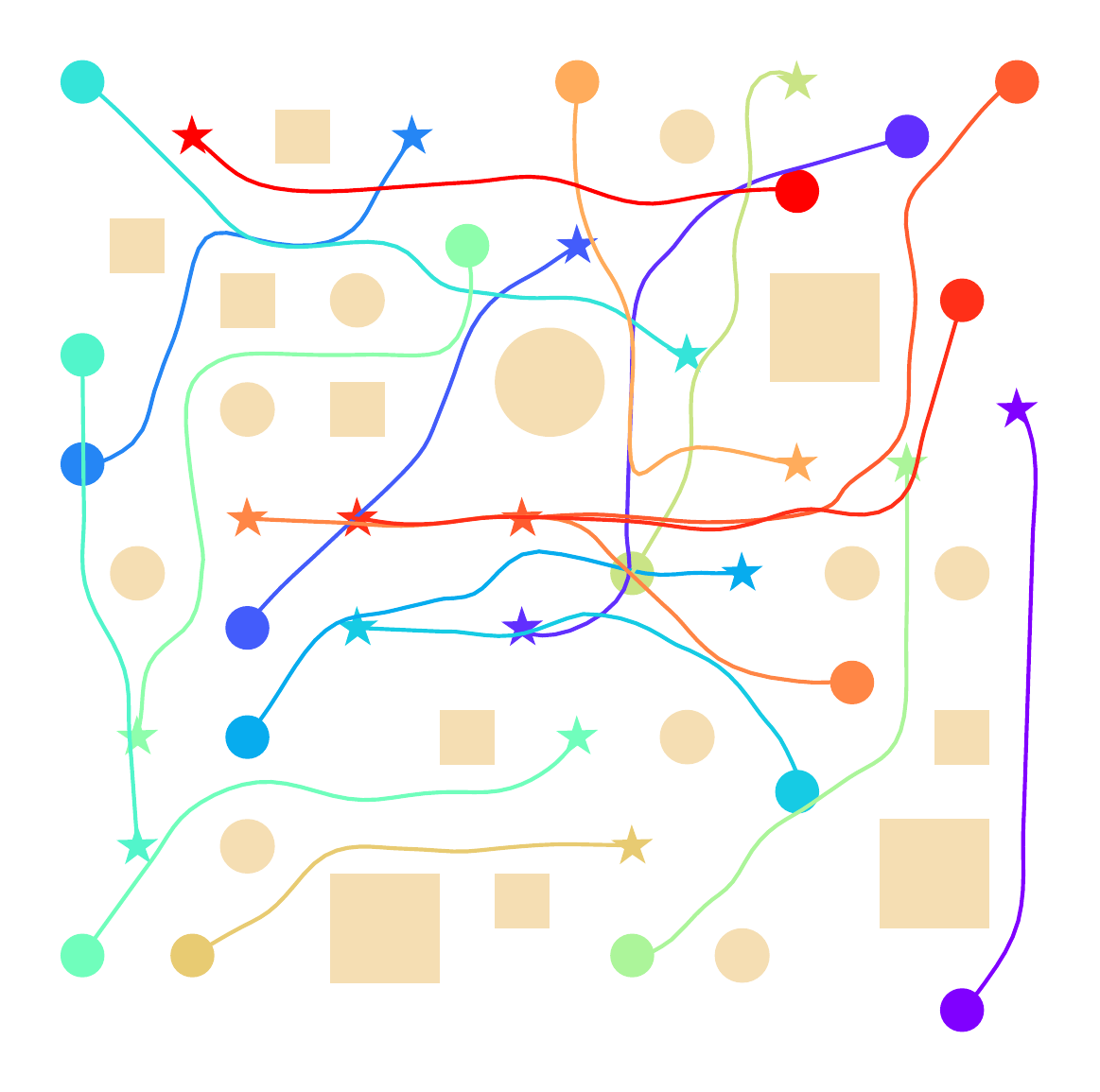}
    \caption{Dense Maps.}
    \label{fig:Trajectories for Dense Maps}
  \end{subfigure}
  \hspace{16pt}
  \begin{subfigure}[t]{0.4\columnwidth}
    \centering
    \includegraphics[width=\textwidth]{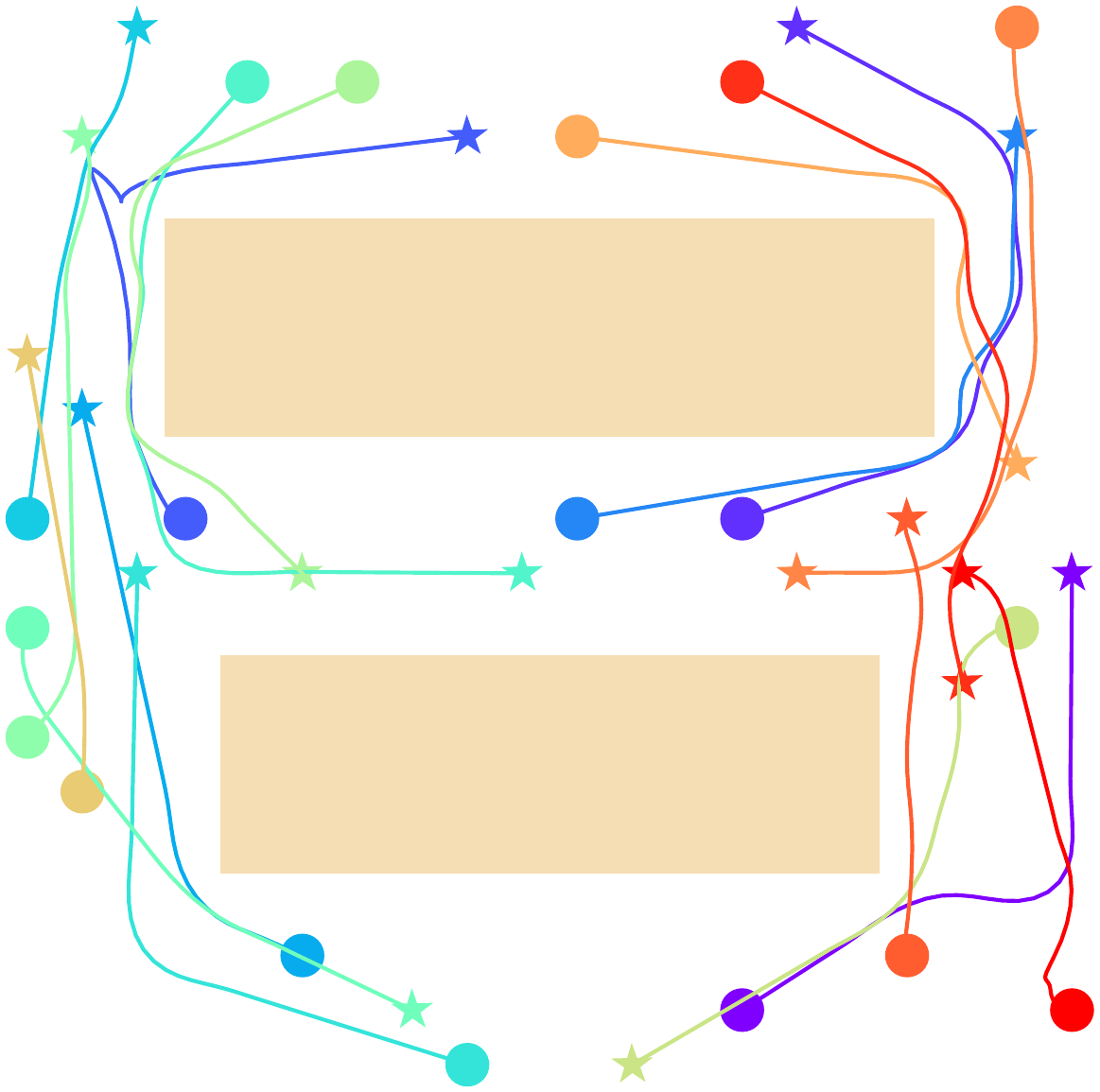}
    \caption{Shelf Maps.}
    \label{fig:Trajectories for Shelf Maps}
  \end{subfigure}
  \caption{Trajectories generated by DGD.}
  \label{fig:Generated Trajectories on MRMP Benchmark.}
\end{figure}

\begin{figure}[t]
    \centering
    \begin{tikzpicture}
        \node[inner sep=0pt] (img) at (0,0) {\includegraphics[width=0.85\columnwidth]{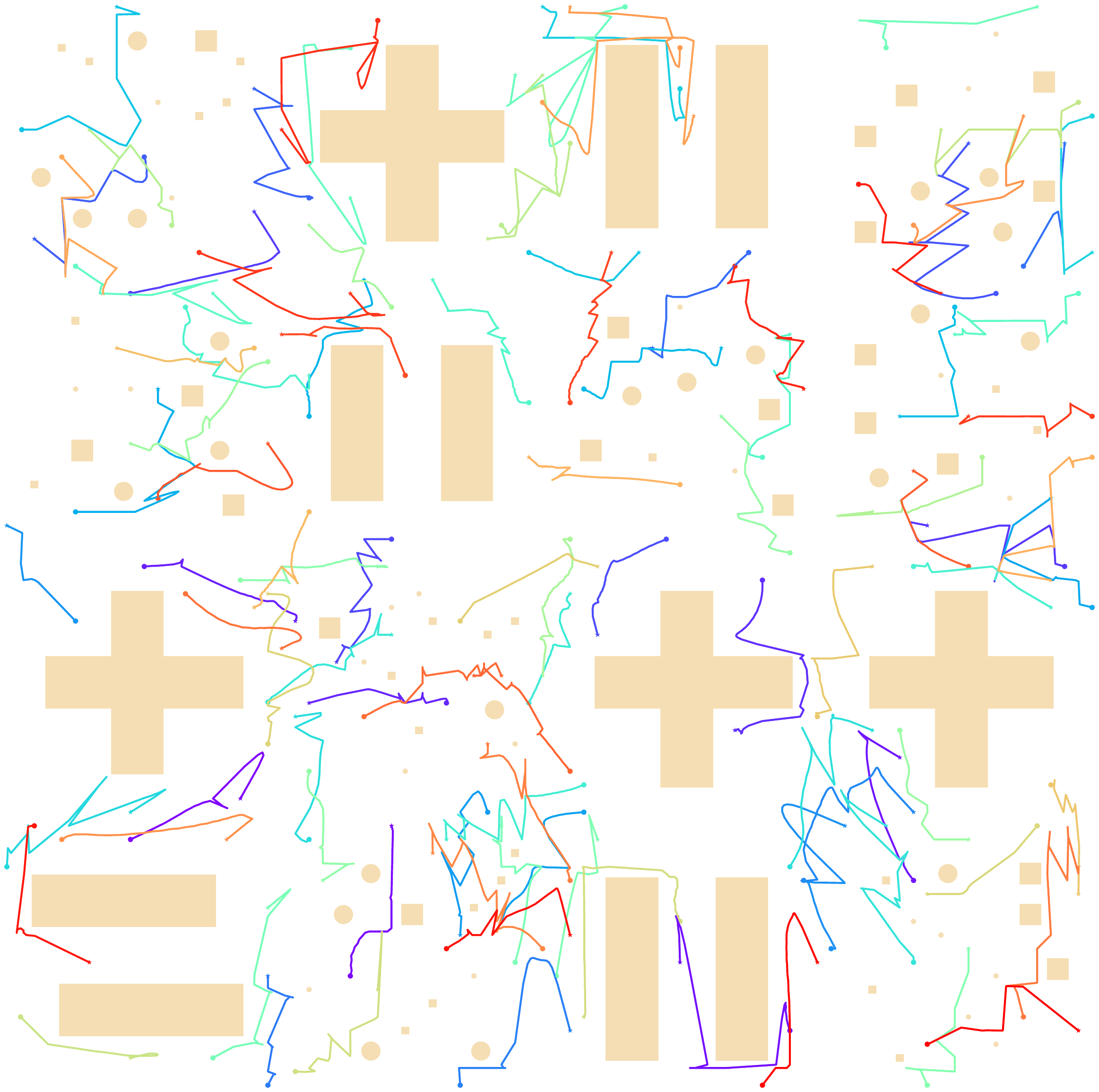}};
    \draw[dashed, gray, thick] 
        ([yshift=-3.5cm] img.center) ++(-7cm,0) -- ++(14cm,0);
    \end{tikzpicture}
    \caption{Trajectories generated by DGD on large maps. The figure is horizontally cropped for readability, with a dashed line on the bottom indicating continuation. See Appendix~\ref{app: Additional Results} for the full version.}
    \label{fig:generated_trajectories_large}
\end{figure}

\subsection{Scalability Analysis}
To evaluate the scalability of DGD, we test it on large-scale environments containing a diverse mix of obstacle types within a single map. These environments feature {\bf 104} obstacles and {\bf 100} robots, substantially exceeding the complexity and scale of standard MRMP benchmarks.

As in earlier experiments, DGD decomposes the global MRMP problem into a set of independent subproblems, each confined to a convex region. This decomposition enables efficient parallelized planning, and allows DGD to scale gracefully to high-dimensional instances. Representative trajectories generated by DGD in this large-scale setting are shown in Figure~\ref{fig:generated_trajectories_large} (figure cropped due to large size).

To the best of our knowledge, DGD is the first diffusion-based MRMP method capable of generating feasible solutions in environments with over 100 obstacles and 100 robots. In contrast to prior diffusion approaches, which have been demonstrated only on up to 40 robots in obstacle-free settings~\cite{shaoul2024multi}, \emph{DGD achieves a 2.5$\times$ improvement in robot count} while simultaneously addressing \emph{significantly more complex environments}. 
This is significant: scalability is a key impediment to apply generative models to real-world multi-robot systems, where both geometric complexity and agent coordination pose significant challenges, and 
Thus, this works makes a significant step towards practical diffusion-based solvers for multi-robot planning.

\subsection{Comparison between DGD and MAPF}
We present a comparison between the MAPF solution and the trajectory generated by our DGD method for the same instance. As shown in Figure~\ref{fig:Trajectories generated by DGD and MAPF.}, the MAPF solution is restricted to a discrete grid, resulting in suboptimal paths with unnecessary turns and increased travel distance. In contrast, DGD generates trajectories with shorter path lengths while maintaining collision-free guarantees. For example, the highlighted blue robot follows a near-optimal straight-line path under DGD, whereas the corresponding MAPF trajectory is longer and more convoluted. This comparison illustrates that DGD not only benefits from the spatiotemporal guidance provided by the MAPF solution but also achieves higher-quality trajectories by operating in continuous space.
\begin{figure}[t]
  \centering
  \begin{subfigure}[t]{0.4\columnwidth}
    \centering
    \includegraphics[width=\textwidth]{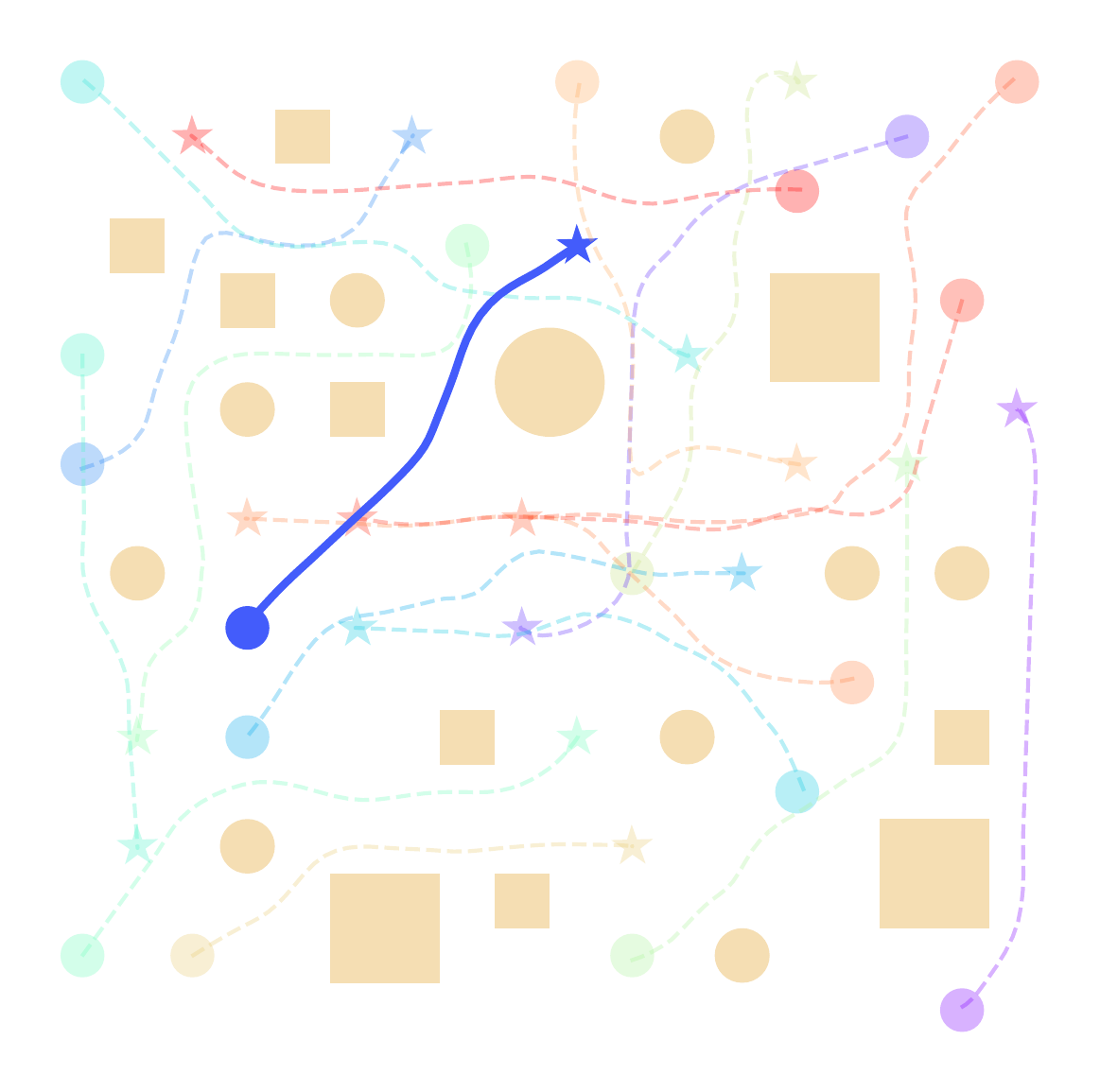}
    \caption{DGD.}
    \label{fig: DGD traj}
  \end{subfigure}
  \hspace{16pt}
  \begin{subfigure}[t]{0.4\columnwidth}
    \centering
    \includegraphics[width=\textwidth]{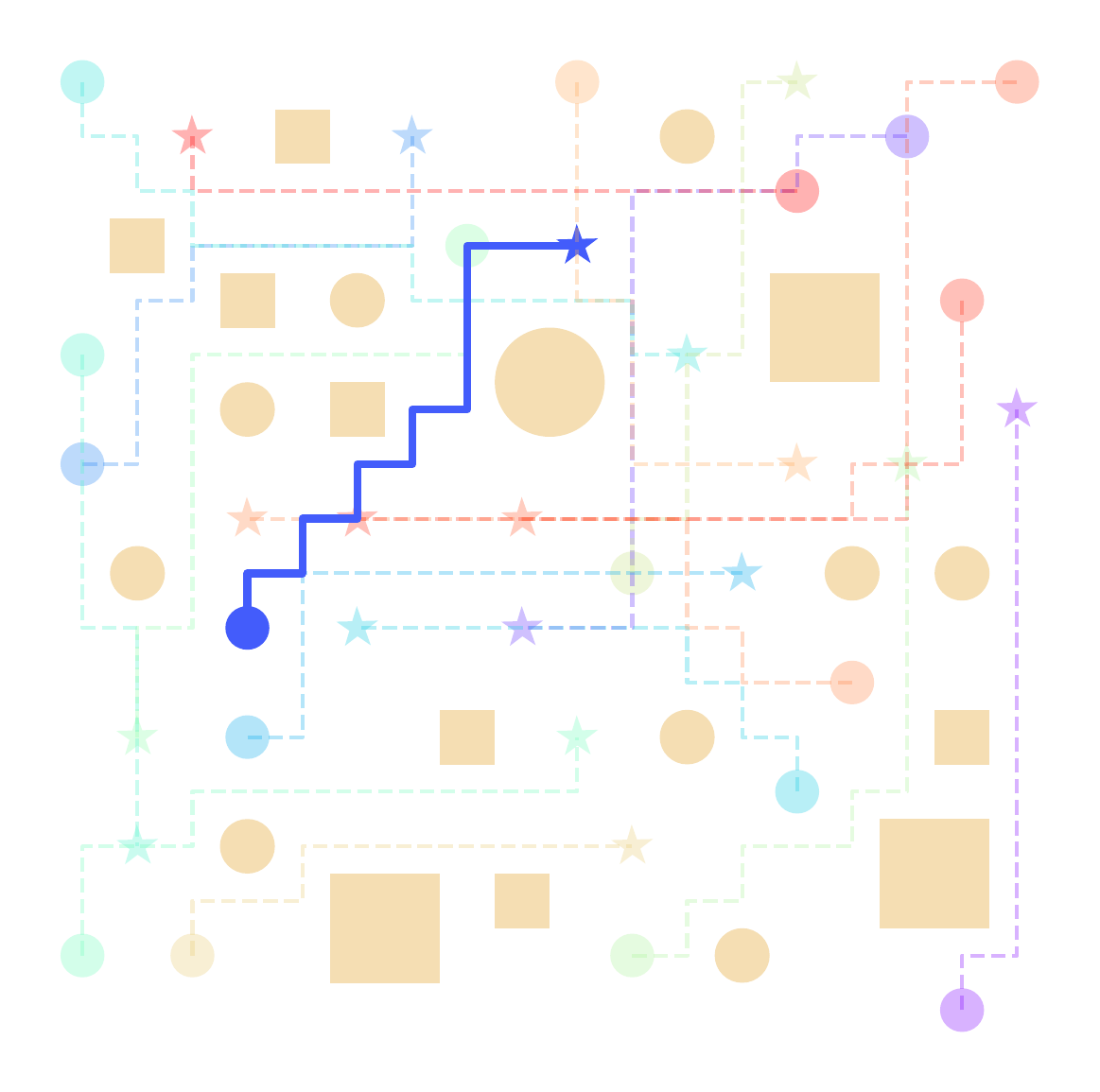}
    \caption{MAPF.}
    \label{fig: MAPF trak}
  \end{subfigure}
  \caption{Trajectories generated by DGD and MAPF.}
  \label{fig:Trajectories generated by DGD and MAPF.}
\end{figure}

\subsection{Limitation Anslysis}
\begin{wrapfigure}[13]{r}{0.3\columnwidth} 
\vspace{-\intextsep}
    \centering
    \begin{tikzpicture}
        \node[inner sep=0pt] (img) at (0,0) {\includegraphics[width=\linewidth]{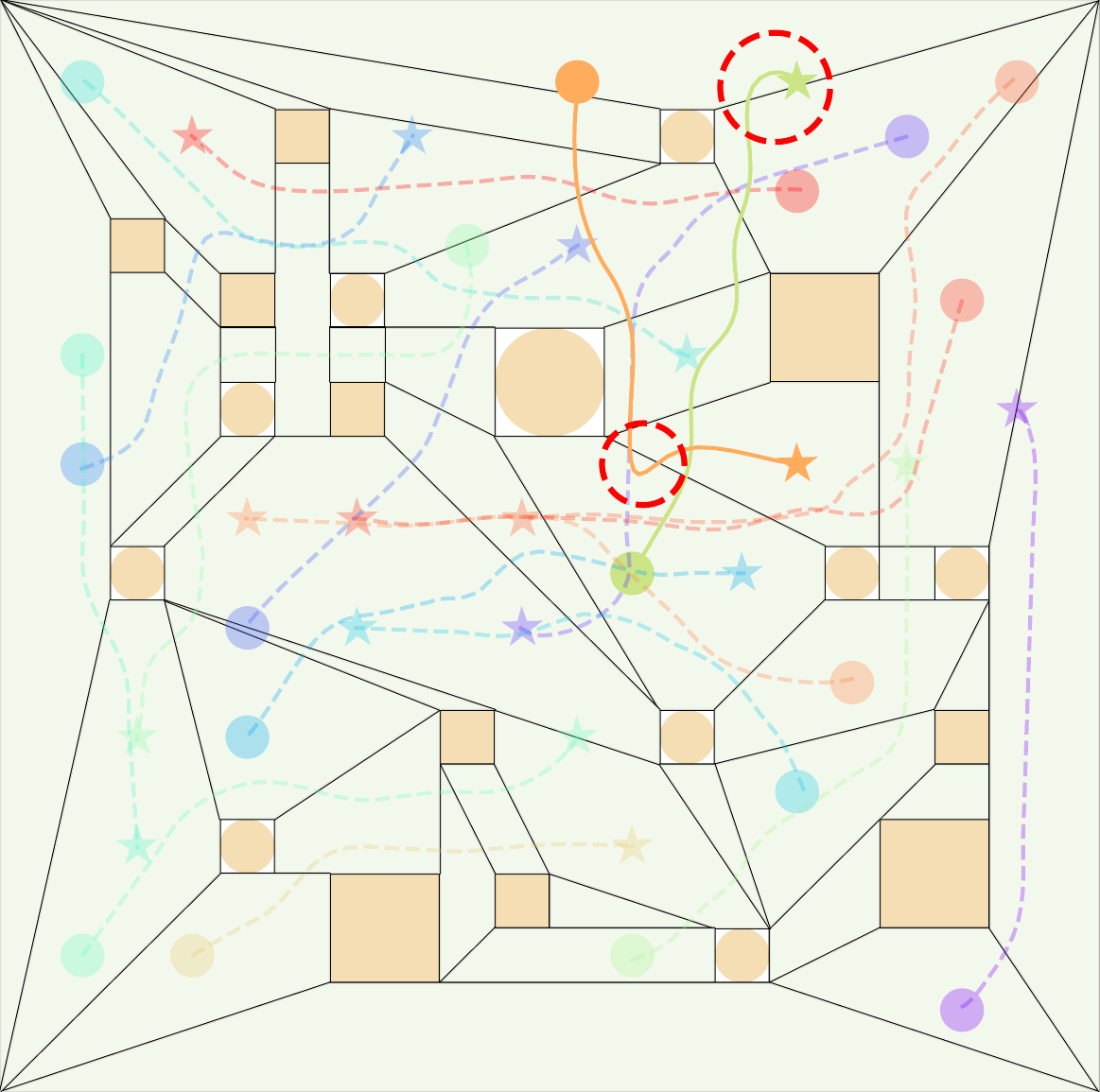}}; 
        \draw[dashed, gray, thick] 
            ([xshift=-2.5cm] img.center) ++(0,-2.5cm) -- ++(0,5cm);
        \draw[dashed, gray, thick] 
            ([yshift=-2.5cm] img.center) ++(-2.5cm,0) -- ++(5cm,0);
    \end{tikzpicture}
    \caption{Partial trajectories.}
    \label{fig:Trajectories generated by DGD on a dense map with 18 robots.}
\end{wrapfigure}
We believe that DGD offers strong potential for advancing multi-robot coordination. One limitation, however, lies in the relatively high acceleration occasionally observed in DGD-generated trajectories, which is caused by abrupt changes in velocity. Figure~\ref{fig:Trajectories generated by DGD on a dense map with 18 robots.} shows a subset of trajectories generated by DGD on a dense map with 18 robots. The red dashed circles highlight two regions where robots undergo sharp velocity changes, corresponding to the green and orange robots. Gray dashed lines indicate the cropped boundary.

These discontinuities typically arise near the boundaries between adjacent convex regions and result from the absence of explicit consistency constraints across subproblems during trajectory generation. While DGD effectively generates collision-free trajectories by leveraging spatiotemporal priors, enforcing inter-region consistency is a promising direction for improving trajectory smoothness and physical realism. Despite such dynamics, DGD ensures collision-free paths by leveraging continuous-space generation guided by spatiotemporal priors. 

\section{Conclusion}
This work introduces \emph{Discrete Guided Diffusion (DGD)}, a method that leverages a decomposition strategy to enable generative diffusion models to efficiently generate collision-free trajectories for multi-robot systems. By decomposing the original MRMP problem into multiple tractable subproblems, DGD alleviates the significant computational burden associated with large-scale MRMP. To address the spatiotemporal dependencies across multiple robots, MAPF solutions are employed to guide diffusion models in generating solutions for each subproblem. In addition, an efficient constraint repair mechanism ensures the feasibility of the generated trajectories.

Extensive experiments across varying obstacle densities and increasing robot counts demonstrate that DGD achieves significantly higher success rates and lower running times than competing methods. Experiments on large-scale maps further validate its scalability and robustness in handling both static obstacles and dynamic interactions, making it a promising approach for real-world applications.

While DGD achieves state-of-the-art performance on MRMP tasks, there are several promising directions to further enhance its capabilities. One opportunity lies in improving the decomposition process to produce more flexible subproblem partitions. In addition, extending the current framework to incorporate interactions between adjacent subregions could help better preserve temporal and spatial consistency across the full trajectory.

\section*{Acknowledgment}
This research is partially supported by NSF grants 2334936, 2334448, 2533631, and NSF CAREER Award 2401285.  
The authors acknowledge Research Computing at the University of Virginia for providing computational resources that have contributed to the results reported within this paper. 
The views and conclusions of this work are those of the authors only.
\bibliographystyle{plainnat} 
\bibliography{references}

\clearpage
\appendix

\section{Missing Proof}
\label{app: missing_proof}
\subsection{Proof of Theorem~\ref{thm:pbd}}
\label{app:proof_pbd}
\begin{proof}

We analyze the time complexity of the algorithm by examining its two main phases: Triangulation and Iterative Merging. Let \(|\mathcal V|\) denote the number of vertices. 

\paragraph{(i) Triangulation.}  
The main idea of triangulation is first to decompose the original nonconvex configuration space $C_{\text{f}}$ into simple polygons and then triangulate these simple polygons. 

According to Theorem 3.6 in~\cite{de2008computational}, the decomposition of an arbitrary simple polygon with \(|\mathcal V|\) vertices into a set of y-monotone components can be executed in $O(|\mathcal V|\log|\mathcal V|)$ time. Once this decomposition is achieved, Theorem 3.7 in~\cite{de2008computational} establishes that each strictly y-monotone polygon admits a triangulation in linear time with respect to the number of its vertices. 

Combining both stages, the overall computational complexity of triangulating $C_{\text{f}}$ is thus asymptotically dominated by the monotone decomposition step, yielding a total time complexity of $O(|\mathcal V|\log|\mathcal V|)$.

\paragraph{(ii) Iterative Merging.}  
Following triangulation, the algorithm enters a greedy, priority-guided region merging phase to produce a compact convex decomposition. Let $O(|\mathcal V|)$ denote the number of initial triangles. These serve as the base regions for merging.

The merging process is implemented via a max-heap data structure $\mathcal H$, which stores all admissible region pairs $(r_1, r_2)$ eligible for merging. Each element in $\mathcal H$ is keyed by a priority function $p(r)$ that encodes topological information. For each region $r \in R$, adjacency information is precomputed, leading to $O(|\mathcal V|)$ total adjacency relations, each inserted once into the heap during initialization.

Each successful merge operation reduces the total number of regions by exactly one. Thus, in the worst case, there are at most $O(|\mathcal V|)$ merges. Each merge operation requires a constant number of heap extractions and updates, each incurring a logarithmic overhead due to the heap structure. Therefore, the total cost of all merge operations is bounded above by $ $.

\subparagraph{Overall Complexity..} 
Combining both phases, the algorithm exhibits a worst-case runtime of $O(|\mathcal V|\log|\mathcal V|)$. 
\end{proof}

\subsection{Proof of Proposition~\ref{prop:decouple}}
\begin{proof}
Because $\{R_j\}$ is a partition of free space into disjoint convex regions, each robot position $\boldsymbol\pi_{m,i}(t)$ lies in exactly one region, so (a) holds.  

Algorithm \ref{alg:transition-extraction} records the first and last timesteps at which each robot enters and exits a region, which bounds its occupancy interval in that region and yields (b).  

Finally, collision and feasibility constraints only involve robots sharing the same region at the same time; since no two robots coincide outside their recorded intervals, regions induce independent subproblems, giving (c).
\end{proof}

\subsection{Proof of Theorem~\ref{thm:Obstacle avoidance}}
\begin{proof}
Beacause he projection operator $\mathcal{P}_{C_{\text{f}}^c}(\bm{x})$ performs the following convex optimization problem at each iteration:
\begin{align}
    \bm{x} = \ & \arg \min_{\bm{y} \in C_{\text{f}}^c} \| \bm{x} - \bm{y} \|_2^2\ .
\end{align}
Since $C_{\text{f}}^c$ is convex and $\| \bm{x} - \bm{y} \|_2^2$ is convex and continuously differentiable over $C_{\text{f}}^c$, the optimization problem has a unique global minimum.

Let $Error$ be the distance between $\boldsymbol{x}_{t}$ and its nearest feasible point. Using Corollary 3 in~\cite{christopher2025neuro}, for any arbitrarily small $\xi > 0$, there exists a time $t$ such that after the update:
\begin{align}
    \label{eq: feasibility_guarantee}
\mathbb{E} \left[ \textit{Error}(\mathcal{U}(\mathcal{P}_{C_{\text{f}}^c}(\bm{x})), C_{\text{f}}^c) \right] \leq \xi .  
\end{align}
where $\mathcal{U}(\cdot)$ is the update operator defined by Langevin dynamics.

Therefore, the expected distance between the final generated trajectory and the feasible convex regions $C_{\text{f}}^c$ is bounded above by $\xi$, which means our DGD method yields a strictly smaller expected constraint violation and provides a feasibility guarantee for the convex constraint set $C_{\text{f}}^c$.

\end{proof}

\section{Additional Results}
\label{app: Additional Results}

\subsection{Detailed Comparison across Methods}
We provide detailed results for all methods across four metrics: success rate, running time, path length, and acceleration. These results are presented in Table~\ref{tab: Success rate (S, in percentage) and running time (T, in seconds) across all methods.} and Table~\ref{tab: Path length and acceleration across all methods.}.

\begin{table}[t]
\centering
\small
\setlength{\tabcolsep}{2pt}
\begin{tabular}{lc|cc|cc|cc|cc|cc}
\toprule
Map & Robots 
& \multicolumn{2}{c|}{DGD} 
& \multicolumn{2}{c|}{DM} 
& \multicolumn{2}{c|}{MPD} 
& \multicolumn{2}{c|}{MMD} 
& \multicolumn{2}{c}{SMD} \\
\cmidrule(lr){3-4} \cmidrule(lr){5-6} \cmidrule(lr){7-8} \cmidrule(lr){9-10} \cmidrule(lr){11-12}
& & S & T & S & T & S & T & S & T & S & T \\
\midrule
\multirow{6}{*}{\rotatebox[origin=c]{90}{\textbf{Basic}}} 
& 3  & \textbf{100} & 6.5 & 12 & \textbf{4.7} & 92 & 9.3 & \textbf{100} & 13.1 & \textbf{100} & 76.6 \\
& 6  & \textbf{100} & 12.2 & 0 & N/A & 76 & \textbf{9.1} & \textbf{100} & 27.1 & \textbf{100} & 254.3 \\
& 9  & \textbf{100} & 17.4 & 0 & N/A & 72 & \textbf{9.9} & \textbf{100} & 41.4 & \textbf{100} & 504.0 \\
& 12 & 96 & 25.0 & 0 & N/A & 52 & \textbf{8.7} & \textbf{100} & 56.0 & N/A & N/A \\
& 15 & 96 & 41.4 & 0 & N/A & 12 & \textbf{10.3} & \textbf{100} & 70.9 & N/A & N/A \\
& 18 & \textbf{96} & 65.7 & 0 & N/A & 8 & \textbf{9.4} & \textbf{96} & 86.3 & N/A & N/A \\
\midrule
\multirow{6}{*}{\rotatebox[origin=c]{90}{\textbf{Dense}}} 
& 3  & \textbf{100} & 6.6  & 0 & N/A & 88 & \textbf{8.3} & 60 & 19.1 & \textbf{100} & 81.2 \\
& 6  & \textbf{100} & 12.3 & 0 & N/A & 4 & \textbf{9.2} & 40 & 36.5 & \textbf{100} & 287.3 \\
& 9  & \textbf{100} & \textbf{18.3} & 0 & N/A & 0 & N/A & 28 & 51.4 & \textbf{100} & 582.1 \\
& 12 & \textbf{100} & \textbf{32.9} & 0 & N/A & 0 & N/A & 8 & 62.5 & N/A & N/A \\
& 15 & \textbf{96} & \textbf{37.0} & 0 & N/A & 0 & N/A & 4 & 72.4 & N/A & N/A \\
& 18 & \textbf{76} & \textbf{47.8} & 0 & N/A & 0 & N/A & 8 & 87.1 & N/A & N/A \\
\midrule
\multirow{6}{*}{\rotatebox[origin=c]{90}{\textbf{Room}}} 
& 3  & \textbf{100} & 17.1  & 4 & \textbf{4.1} & 12 & 8.8 & 60 & 28.2 & \textbf{100} & 72.6 \\
& 6  & \textbf{100} & \textbf{38.9} & 0 & N/A & 0 & N/A & 24 & 55.5 & \textbf{100} & 181.0 \\
& 9  & \textbf{100} & \textbf{59.9} & 0 & N/A & 0 & N/A & 8 & 142.4 & 96 & 239.4 \\
& 12 & \textbf{100} & \textbf{84.0} & 0 & N/A & 0 & N/A & 4 & 122.6 & N/A & N/A \\
& 15 & \textbf{100} & \textbf{115.5} & 0 & N/A & 0 & N/A & 0 & N/A & N/A & N/A \\
& 18 & \textbf{100} & \textbf{219.0} & 0 & N/A & 0 & N/A & 0 & N/A & N/A & N/A \\
\midrule
\multirow{6}{*}{\rotatebox[origin=c]{90}{\textbf{Shelf}}} 
& 3  & \textbf{100} & 31.5  & 4 & \textbf{4.1} & 32 & 9.1 & 68 & 23.7 & \textbf{100} & 88.0 \\
& 6  & \textbf{100} & \textbf{64.6} & 0 & N/A & 0 & N/A & 32 & 48.5 & \textbf{100} & 274.3 \\
& 9  & \textbf{100} & \textbf{80.4} & 0 & N/A & 0 & N/A & 8 & 61.6 & 96 & 607.8 \\
& 12 & \textbf{100} & \textbf{131.5} & 0 & N/A & 0 & N/A & 4 & 95.1 & N/A & N/A \\
& 15 & \textbf{92} & \textbf{187.0} & 0 & N/A & 0 & N/A & 0 & N/A & N/A & N/A \\
& 18 & \textbf{84} & \textbf{324.8} & 0 & N/A & 0 & N/A & 0 & N/A & N/A & N/A \\
\bottomrule
\end{tabular}
\caption{Success rate (S, in percentage) and running time (T, in seconds) across all methods. DM is a standard diffusion model trained on feasible trajectories to directly solve MRMP. The reported running time for DGD is measured under parallel execution.}
\label{tab: Success rate (S, in percentage) and running time (T, in seconds) across all methods.}
\end{table}

Table~\ref{tab: Success rate (S, in percentage) and running time (T, in seconds) across all methods.} presents the success rate (S) and running time (T) across all methods. Across all maps and agent counts, DGD \emph{maintains $100\%$ or near-$100\%$ success}, even in the most challenging Room and Shelf settings with 18 robots. This highlights the strong generalization and feasibility enforcement of our method. DM fails to generate feasible trajectories in all but the easiest scenarios (3 robots). It yields zero success in nearly all settings with 6 or more robots, confirming its limited capacity to handle hard constraint satisfaction. While SMD occasionally matches DGD in success rate, it does so at significantly higher computational cost. For example, in the Basic map with 9 robots, both DGD and SMD reach $100\%$ success, but DGD requires $17.4$s while SMD takes over $500$s. DGD provides a favorable balance between running time and feasibility.

Table~\ref{tab: Path length and acceleration across all methods.} reports the average path length (P) and acceleration (A) across different map types and robot counts. DGD consistently exhibits longer trajectories and higher acceleration compared to the baselines. This behavior is expected, as DGD is the only method capable of handling a broader range of challenging instances, particularly those involving dense environments and large robot teams. These instances naturally require longer paths due to more complex navigation requirements. The increased acceleration values are primarily caused by abrupt transitions between convex regions. Our current framework does not explicitly regulate these transitions, which can result in sharp velocity changes. Incorporating transition-aware smoothing or directly optimizing acceleration during inter-region movement is a promising direction for future work.

\begin{table}[t]
\centering
\small
\setlength{\tabcolsep}{2pt}
\begin{tabular}{lc|cc|cc|cc|cc|cc}
\toprule
Map & Robots 
& \multicolumn{2}{c|}{DGD} 
& \multicolumn{2}{c|}{DM} 
& \multicolumn{2}{c|}{MPD} 
& \multicolumn{2}{c|}{MMD} 
& \multicolumn{2}{c}{SMD} \\
\cmidrule(lr){3-4} \cmidrule(lr){5-6} \cmidrule(lr){7-8} \cmidrule(lr){9-10} \cmidrule(lr){11-12}
& & P & A & P & A & P & A & P & A & P & A \\
\midrule
\multirow{6}{*}{\rotatebox[origin=c]{90}{\textbf{Basic}}} 
& 3  & 1.16 & 0.008 & 1.11 & 0.005 & 1.11 & 0.006 & 1.12 & 0.006 & 1.10 & 0.005 \\
& 6  & 1.21 & 0.011 & N/A & N/A & 1.12 & 0.006 & 1.12 & 0.006 & 1.10 & 0.006 \\
& 9  & 1.26 & 0.013 & N/A & N/A & 1.08 & 0.008 & 1.12 & 0.006 & 1.12 & 0.006 \\
& 12 & 1.28 & 0.014 & N/A & N/A & 1.10 & 0.009 & 1.12 & 0.006 & N/A & N/A \\
& 15 & 1.31 & 0.014 & N/A & N/A & 1.09 & 0.009 & 1.12 & 0.006 & N/A & N/A \\
& 18 & 1.33 & 0.015 & N/A & N/A & 1.09 & 0.008 & 1.13 & 0.006 & N/A & N/A \\
\midrule
\multirow{6}{*}{\rotatebox[origin=c]{90}{\textbf{Dense}}} 
& 3  & 1.17 & 0.007 & N/A & N/A & 1.14 & 0.008 & 1.10 & 0.006 & 1.12 & 0.006 \\
& 6  & 1.26 & 0.012 & N/A & N/A & 1.11 & 0.008 & 1.15 & 0.006 & 1.13 & 0.007 \\
& 9  & 1.30 & 0.014 & N/A & N/A & N/A & N/A & 1.16 & 0.006 & 1.13 & 0.008 \\
& 12 & 1.35 & 0.016 & N/A & N/A & N/A & N/A & 1.15 & 0.006 & N/A & N/A \\
& 15 & 1.35 & 0.017 & N/A & N/A & N/A & N/A & 1.17 & 0.007 & N/A & N/A \\
& 18 & 1.37 & 0.017 & N/A & N/A & N/A & N/A & 1.18 & 0.007 & N/A & N/A \\
\midrule
\multirow{6}{*}{\rotatebox[origin=c]{90}{\textbf{Room}}} 
& 3  & 1.36 & 0.008 & 1.12 & 0.006 & 1.12 & 0.007 & 1.22 & 0.007 & 1.17 & 0.008 \\
& 6  & 1.44 & 0.013 & N/A & N/A & N/A & N/A & 1.16 & 0.007 & 1.19 & 0.007 \\
& 9  & 1.49 & 0.014 & N/A & N/A & N/A & N/A & 1.15 & 0.007 & 1.22 & 0.008 \\
& 12 & 1.50 & 0.015 & N/A & N/A & N/A & N/A & 1.14 & 0.008 & N/A & N/A \\
& 15 & 1.52 & 0.016 & N/A & N/A & N/A & N/A & N/A & N/A & N/A & N/A \\
& 18 & 1.55 & 0.016 & N/A & N/A & N/A & N/A & N/A & N/A & N/A & N/A \\
\midrule
\multirow{6}{*}{\rotatebox[origin=c]{90}{\textbf{Shelf}}} 
& 3  & 1.27 & 0.008 & 1.13 & 0.005 & 1.15 & 0.007 & 1.16 & 0.006 & 1.14 & 0.007 \\
& 6  & 1.30 & 0.011 & N/A & N/A & N/A & N/A & 1.18 & 0.007 & 1.16 & 0.008 \\
& 9  & 1.31 & 0.013 & N/A & N/A & N/A & N/A & 1.16 & 0.007 & 1.17 & 0.009 \\
& 12 & 1.33 & 0.013 & N/A & N/A & N/A & N/A & 1.22 & 0.008 & N/A & N/A \\
& 15 & 1.36 & 0.014 & N/A & N/A & N/A & N/A & N/A & N/A & N/A & N/A \\
& 18 & 1.38 & 0.015 & N/A & N/A & N/A & N/A & N/A & N/A & N/A & N/A \\
\bottomrule
\end{tabular}
\caption{Path length(P) and acceleration (A) across all methods.}
\label{tab: Path length and acceleration across all methods.}
\end{table}

\subsection{Example Trajectories for All Methods}
We visualize example trajectories for each method on 18-robot instances, including only those where the method produced feasible solutions.
Figure~\ref{fig:Representative trajectories generated by DGD across different map types.} shows the representative trajectories generated by DGD in 18 robot instances across the four types of maps. Figure~\ref{fig:Representative trajectories generated by MMD on basic and dense maps.} presents the corresponding results for MMD on basic and dense maps. For visualization purposes, we apply the Savitzky–Golay filter, which is widely used in multi-robot motion planning to smooth trajectories~\cite{shaoul2024multi}. 
\begin{figure}[t]
  \centering
  \begin{subfigure}[t]{0.23\textwidth}
    \centering
    \includegraphics[width=\textwidth]{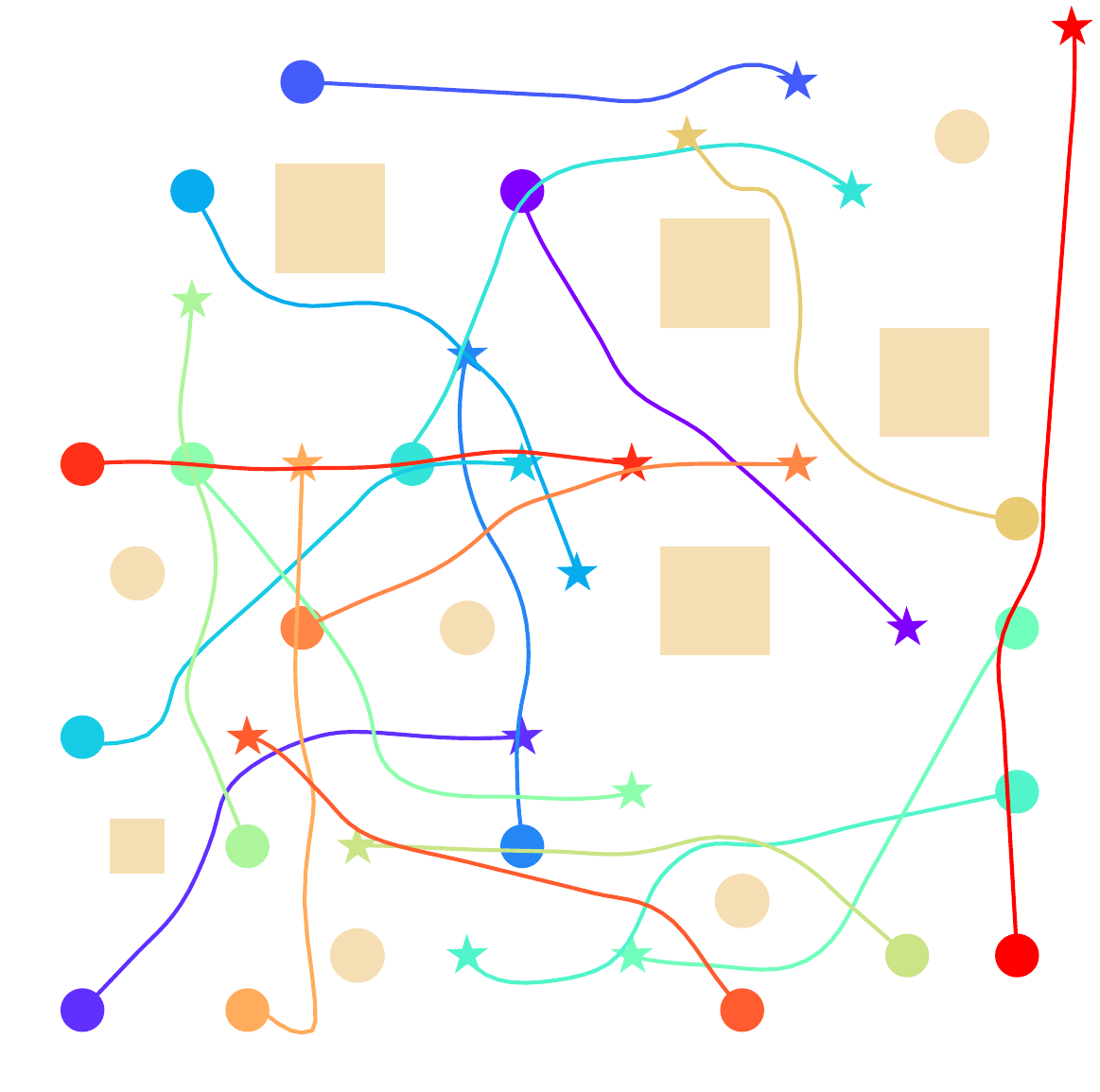}
    \caption{Basic Maps.}
    \label{fig: DGD Trajectories for Basic Maps}
  \end{subfigure}
  \hfill
  \begin{subfigure}[t]{0.23\textwidth}
    \centering
    \includegraphics[width=\textwidth]{Figures/dense_18.pdf}
    \caption{Dense Maps.}
    \label{fig: DGD Trajectories for Dense Maps}
  \end{subfigure}
  \hfill
  \begin{subfigure}[t]{0.23\textwidth}
    \centering
    \includegraphics[width=\textwidth]{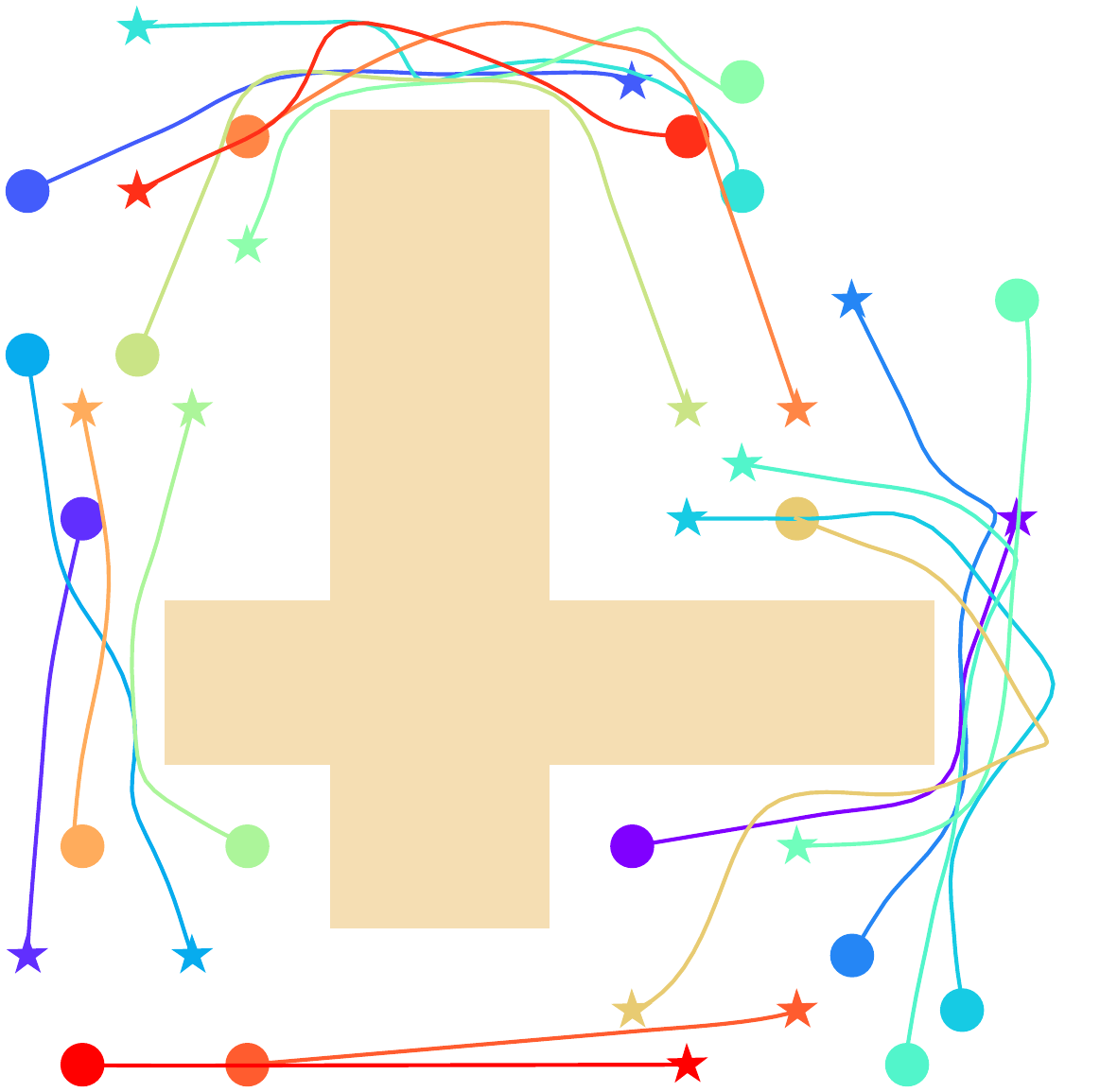}
    \caption{Room Maps.}
    \label{fig: DGD Trajectories for Room Maps}
  \end{subfigure}
  \hfill
  \begin{subfigure}[t]{0.23\textwidth}
    \centering
    \includegraphics[width=\textwidth]{Figures/shelf_18.pdf}
    \caption{Shelf Maps.}
    \label{fig: DGD Trajectories for Shelf Maps}
  \end{subfigure}
  \caption{Representative trajectories generated by DGD across different map types.}
  \label{fig:Representative trajectories generated by DGD across different map types.}
\end{figure}

\begin{figure}[t]
  \centering
  \begin{subfigure}[t]{0.23\textwidth}
    \centering
    \includegraphics[width=\textwidth]{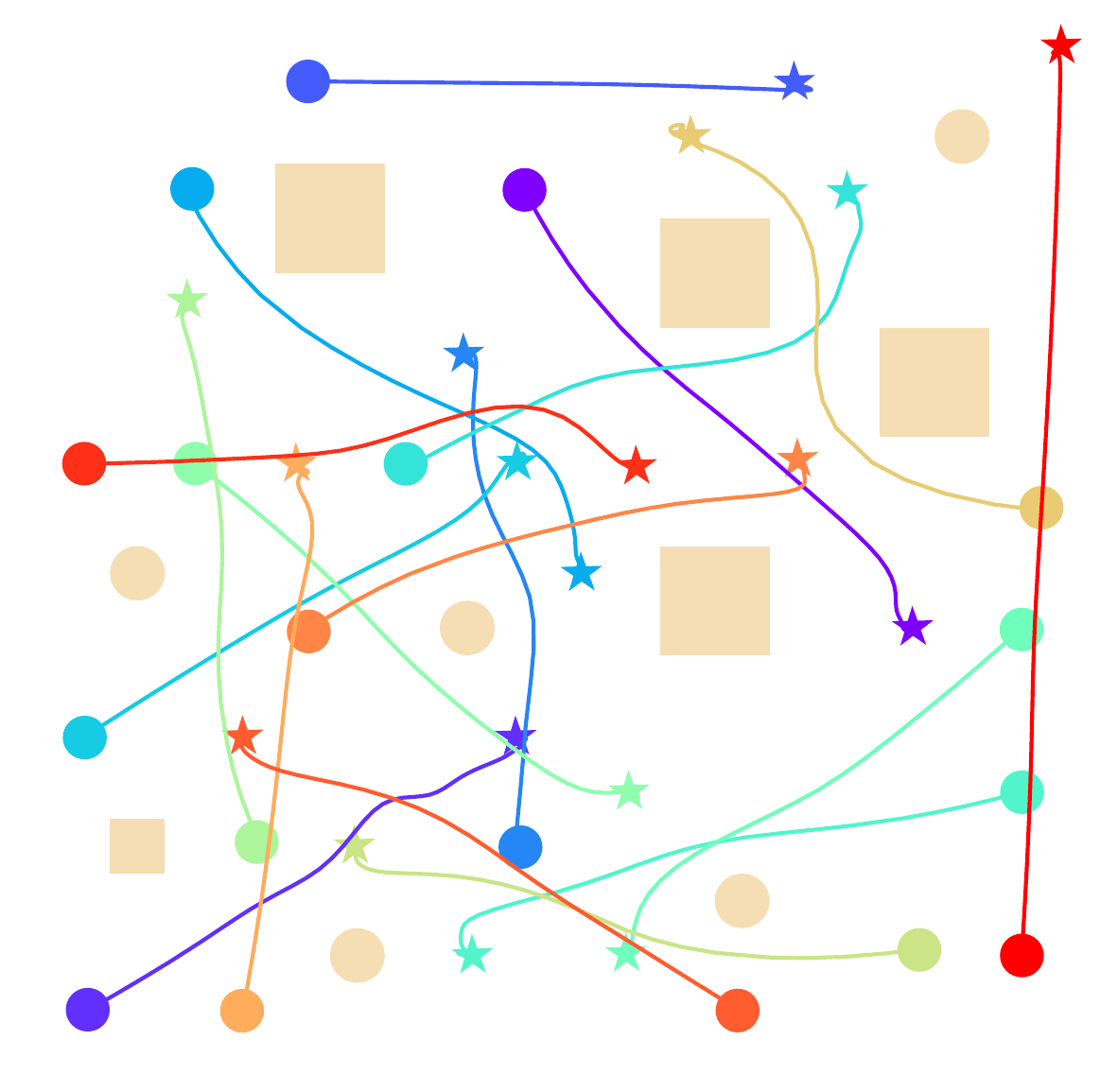}
    \caption{Basic Maps.}
    \label{fig: MMD Trajectories for Basic Maps}
  \end{subfigure}
  \hspace{8pt}
  \begin{subfigure}[t]{0.23\textwidth}
    \centering
    \includegraphics[width=\textwidth]{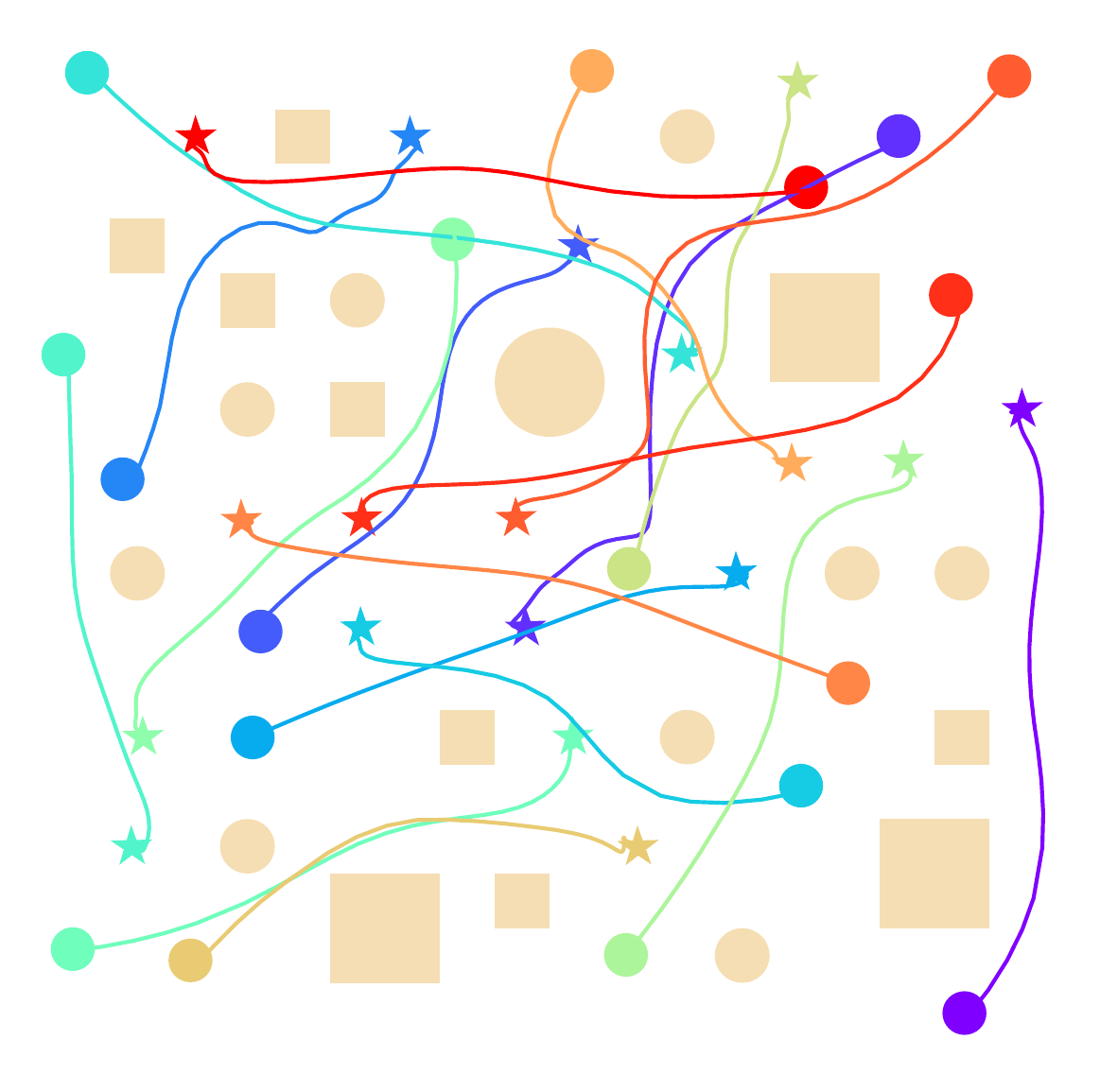}
    \caption{Dense Maps.}
    \label{fig: MMD Trajectories for Dense Maps}
  \end{subfigure}
  \caption{Representative trajectories generated by MMD on basic and dense maps.}
  \label{fig:Representative trajectories generated by MMD on basic and dense maps.}
\end{figure}

We also present the full version of the results on the large map in Figure~\ref{fig:Trajectories generated by DGD on large maps}.
\begin{figure}[t]
    \centering
  \includegraphics[width=0.8\textwidth]{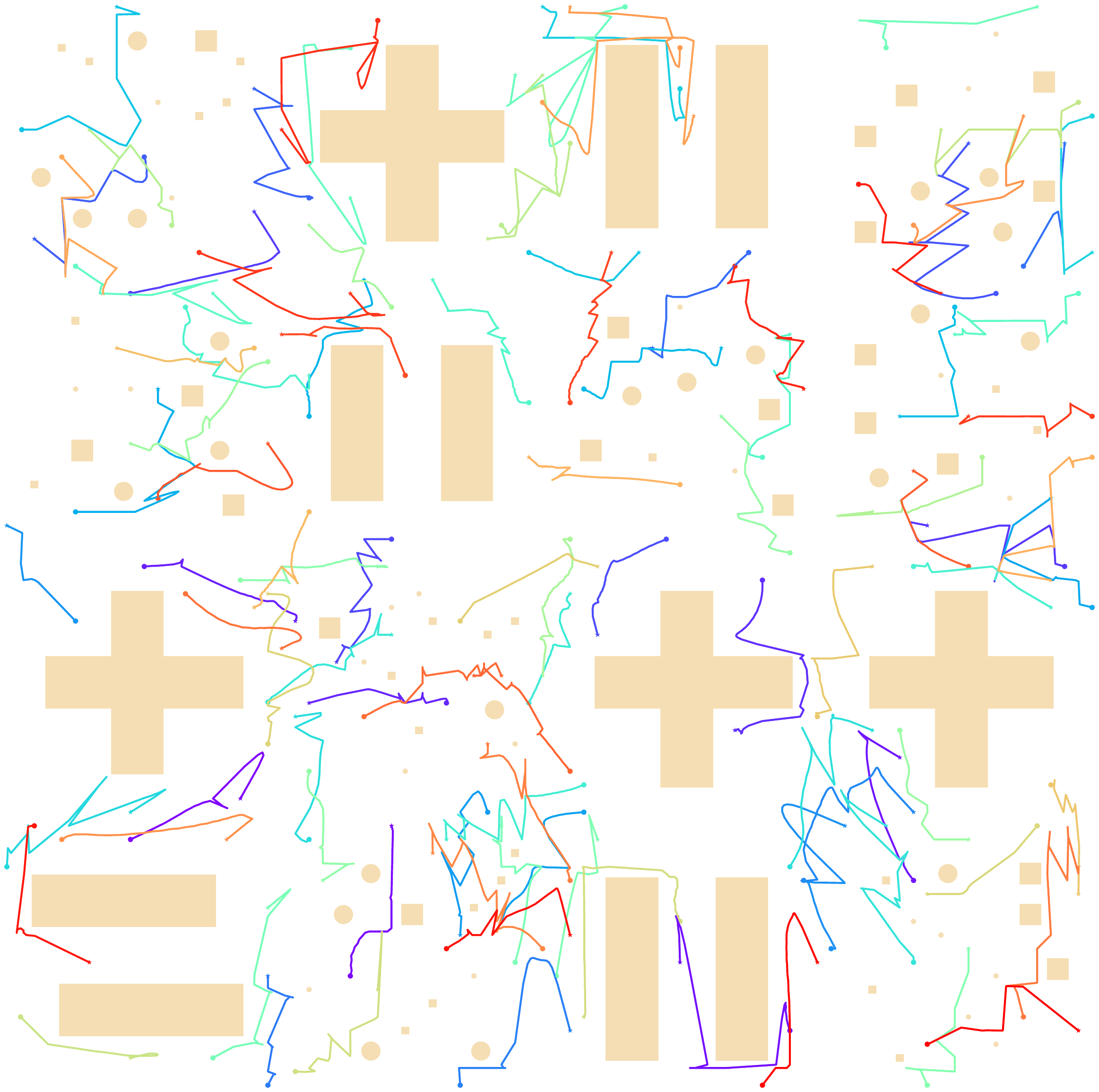}
  \caption{Trajectories generated by DGD on large maps.}
  \label{fig:Trajectories generated by DGD on large maps}
\end{figure}

\section{Implementation Details}
\label{app: Implementation Details}

\textbf{Software}: The software used for experiments is Rocky Linux release 8.9, Python 3.8, Cuda 12.2, and PyTorch 2.1.2.

\textbf{Hardware}: For each of our experiments, we used the NVIDIA RTX A6000 GPU.

\subsection{MRMP Details}
In our experiments, the size of each local map was $2 \times 2$ units. The robot radius was set to 0.04 units in all cases, except for the large-scale map, where it was 0.005 units. The obstacle sizes in basic and dense maps varied between 0.05 and 0.1 units. 

\subsection{Training Details}
Our implementation builds upon the official code of \citet{shaoul2024multi}, \citet{li2021eecbs}, and \citet{ivanfratric2025polypartition}, with modifications to accommodate our specific requirements. Since MMD performs well in generating collision-free trajectories in obstacle-free environments, we leverage it to construct feasible trajectory datasets. Specifically, we first train MMD on single-robot motion planning tasks and then use it to generate multi-robot trajectories for training. Table~\ref{table: hyperparams for Training} summarizes the hyperparameters used during training.
\begin{table}[t]
\centering
\begin{tabular}{cc}
\hline
HyperParameters         & Value \\ \hline
Diffusion Sampling Step &   25    \\
Learning Rate           &   3e-4   \\
Batch Size              &   64   \\
Optimizer               &   Adam    \\ \hline
\end{tabular}
\caption{Hyperparameters for Training in Experiments.}
\label{table: hyperparams for Training}
\end{table}

\end{document}